\documentclass[12pt]{article}

\usepackage{amssymb,amsmath,amsthm} 
\usepackage{graphicx, rotating}      
\RequirePackage{multirow}            
\usepackage{url}                     
\usepackage{algpseudocode}           
\usepackage{algorithm}               
\usepackage{algorithmicx}            
\usepackage{subcaption}              
\usepackage{paralist}                
\usepackage{mathrsfs}                
\usepackage{arydshln}                
\usepackage{booktabs}                
\usepackage{mathtools}               
\usepackage{natbib}        
\usepackage[normalem]{ulem}          
\RequirePackage[colorlinks,citecolor=blue,urlcolor=blue]{hyperref} 

\newtheorem{theorem}{Theorem}

\newtheorem{remark}{Remark}

\DeclareGraphicsExtensions{.jpg,.pdf,.mps,.png} 




\author{
  Jialai She
}
\date{} 

\title{Beyond Additivity: Sparse Isotonic Shapley Regression toward Nonlinear Explainability}

\begin{document}

\maketitle
\begin{abstract}
Shapley values, a gold standard for feature attribution in Explainable AI, face two primary challenges. First, the canonical Shapley framework assumes that the worth function is additive, yet real-world payoff constructions---driven by non-Gaussian distributions, heavy tails, feature dependence, or domain-specific loss scales---often violate this assumption, leading to distorted attributions. Second, achieving sparse explanations in high-dimensional settings by computing dense Shapley values and then applying ad hoc thresholding is prohibitively costly and risks inconsistency.
We introduce Sparse Isotonic Shapley Regression (SISR), a unified nonlinear explanation framework. SISR simultaneously learns a monotonic transformation to restore additivity---obviating the need for a closed-form specification---and enforces an L0 sparsity constraint on the Shapley vector, enhancing computational efficiency in large feature spaces. Its optimization algorithm leverages Pool-Adjacent-Violators for efficient isotonic regression and normalized hard-thresholding for support selection, ensuring ease in implementation and global convergence guarantees.
Analysis shows that SISR recovers the true transformation in a wide range of scenarios and achieves strong support recovery even in high noise. Moreover, we are the first to demonstrate that irrelevant features and inter-feature dependencies can induce a true payoff transformation that deviates substantially from linearity.  Extensive experiments in regression, logistic regression, and tree ensembles demonstrate that SISR stabilizes attributions across payoff schemes and correctly filters irrelevant features; in contrast, standard Shapley values suffer severe rank and sign distortions. By unifying nonlinear transformation estimation with sparsity pursuit, SISR advances the frontier of nonlinear explainability, providing a theoretically grounded and practical attribution framework.
\end{abstract}
\textbf{Keywords}: Shapley value, machine learning explainability, isotonic regression, sparsity pursuit
\section{Introduction and Motivation}\label{sec:intro}
Let \( F = \{1, 2, \ldots, p\} \) denote the set of \( p \) features, and let \( \nu: 2^F \rightarrow \mathbb{R} \) be the {characteristic function or payoff function, with \( \nu(A) \) representing the   contribution or {worth} generated by a subset \( A \subseteq  F \) of features working together  (often referred to as a \emph{coalition} in game theory).
A central question in economics and cooperative game theory is how to fairly allocate the value of a coalition to its individual members.    The \emph{Shapley value} \citep{shapley1953value}, a concept from Nobel laureate Lloyd Shapley, offers a theoretically grounded solution by assigning payoffs according to each member's average marginal contribution across all possible subsets.

In this paper, we denote the Shapley value for feature \( j \) by \( \beta_j \) for \( 1 \leq j \leq p \), quantifying the fair share or importance of feature \( j \);  for a subset \( A \subseteq F \)   we define \( \beta_A \) as the vector \( [\beta_j]_{j \in A} \in \mathbb{R}^{|A|} \).
For brevity, we also write \( \nu_A \) as shorthand for \( \nu(A) \) for any subset \( A \subseteq F \). Shapley values establish a connection between the payoff function \( \nu(A) \) and the underlying model parameters \( \beta_A \). To make this dependence explicit, we introduce a  function \( V(\beta_1, \ldots, \beta_p; A) \), also denoted by \( V_A(\{ \beta_j \}_{j \in A}) \),  characterizing the deterministic, \emph{noise-free} contribution associated with subset \( A \):
\begin{align}
\nu_A \sim V_A(\{ \beta_j \}_{j \in A}), \label{nu-V}
\end{align}
where \( \sim \) denotes approximate equality up to noise, a convention  adopted throughout the paper. 

In recent years, Shapley values have attracted substantial attention in machine learning, particularly in the field of \emph{Explainable AI} (\textbf{XAI}) \citep{ancona2019explaining}. While assessing variable importance in simple regression is straightforward using traditional tools like $T$-tests and $p$-values, this task becomes a formidable challenge for the complex, ``black-box" models now widely used to analyze sequential data in economics and finance. For sophisticated models---ranging from \textit{tree-based ensembles} like random forests and boosted trees to \textit{deep neural networks} like Long Short-Term Memory (LSTM) networks---standard inference methods are no longer applicable, making interpretation  notoriously difficult.

Shapley values provide a model-agnostic framework for quantifying feature importance. This is applied in two main ways: local explanations, which explain a single prediction (e.g., SHAP by  \cite{Lundberg2017}), and global explanations, which explain the model's overall behavior (e.g., SAGE by \cite{covert2020understanding}). Our work focuses on the global setting, where the goal is to find a single, interpretable set of importance values for the entire model.
Specifically, for a prediction model \( f(x) \), where \( x \in \mathbb{R}^p \), researchers first design a payoff function \( \nu_A \) over subsets \( A \subseteq F \) to quantify the model's global performance when using only the features in \( A \). This reframes the explanation task as a ``\textbf{credit allocation}'' problem, enabling the use of Shapley values to quantify feature importance, and perhaps more importantly, to construct interpretable \textit{surrogate models} based on restricted feature sets.
The resulting additive structure of individual feature contributions is the \textit{very} property  appealing for interpretability, as it provides an intuitive, linear explanation of an intricate model's behavior,  a primary goal in XAI.

However, standard Shapley-based methods also face several limitations that restrict their practical utility in complex modeling scenarios.
\\

\textbf{(i) Moving beyond additive frameworks:}
Given a prediction model, various methods have been proposed  to construct the payoff function $\nu_A$ for Shapley-value analysis. (a)
A fundamental approach involves retraining the model on every subset of features $A$ and defining $\nu_A$ based on the reduction in statistical accuracy (such as  $R^2$  in regression) \citep{Lipo2001}. However, this exhaustive procedure may be computationally prohibitive for modern  AI models due to its exponential cost. (b) To circumvent retraining,   SAGE \citep{covert2020understanding} provides an efficient alternative: it keeps the trained model fixed and quantifies the expected loss increase when certain features are made unavailable.   The approach marginalizes over missing inputs---conditionally in theory and, for scalability, interventionally in practice.
(c) In contrast, SHAP and TreeSHAP \citep{Lundberg2017,Lundberg2020} define local payoffs  for each instance by marginalizing absent features and then derive global importance by aggregating the resulting local Shapley attributions. (d) Other global variants, such as Sobol-Shapley indices   \citep{owen2014sobol}
and derivative-based formulations \citep{Duan2025}, approximate risk or variance decomposition under specific assumptions (feature independence, distributional priors, or model smoothness), often motivated by numerical sensitivity analysis rather than prediction risk.
Once $\nu_A$ is constructed, researchers often mechanically apply the Shapley formula to compute feature attributions.

However, the theoretical justification for Shapley values relies on several foundational   ``{axioms'}'---efficiency, symmetry, linearity, and nullity \citep{shapley1953value}---which   are not easily testable and are \emph{rarely} validated in practice. In particular, Shapley's framework implicitly assumes an \textbf{additive structure} \citep{Lundberg2017}:
\begin{align}
\nu_A \sim \sum_{j\in A} \beta_j \quad \mbox{ or } \quad V_A(\{ \beta_j \}_{j \in A}) = \sum_{j\in A} \beta_j. \label{shapleyadditive}
\end{align}
But the so-called additive feature attribution is not guaranteed to hold in real-world constructions of coalition values.
 For example,  we can reformulate the abstract Shapley axioms and principles  into a multivariate Gaussian assumption   (cf. Section~\ref{sec:method}), but many of the constructions mentioned previously are prone to violating this assumption due to \textbf{non-Gaussian} characteristics such as bounded ranges, heavy tails, and skewness.
     In particular,   \cite{fryer2021shapley}   recently proposed a realistic ``taxicab'' payoff   defined by a \textit{winner-takes-all} dynamic  that is  in stark contrast to \eqref{shapleyadditive}:
\begin{align}
 V_A(\{ \beta_j \}_{j \in A})=\max_{j\in A} \beta_j, \ \forall A\subseteq  F.
\end{align}
     Such nonlinear relations are prevalent in    applications but fundamentally violate the additive model underpinning standard Shapley value estimation.
\\

\textbf{(ii) Embedding sparsity into value attribution:}  In many real-world applications with a large number of features, a substantial proportion contribute only negligibly---or are effectively irrelevant---to the overall outcome, making them unnecessary to explain in practice \citep{strumbelj2014explaining,covert2021explaining}.  Exploiting the    structural parsimony can enhance both  statistical accuracy and  interpretability of Shapley values.   In implementation, leveraging sparsity helps to  reduce  iteration complexity, thanks to a substantially smaller   effective model size, along with  mitigating communication costs and storage requirements  in high-dimensional settings.

However, existing approaches adopt a \textbf{greedy} strategy to achieve sparsity  and have significant drawbacks. Many methods first compute dense Shapley values for the \emph{full} model and then apply post-hoc ranking or thresholding  \citep{Cohen2007,JOTHI2021103,fryer2021shapley,Au2022}.
For large $p$, such multi-step procedures are not only  inefficient  but may also fail to provide faithful explanations or meaningful   selection  (see, e.g., \cite{covert2021explaining,slack2020fooling,pmlr-v127-ma20a}).
An alternative class of older, less efficient approaches resorts to an $\ell_1$-penalty   \citep{Lundberg2017,ribeiro2016should}, but requires cumbersome parameter tuning and induces unwanted shrinkage on the attribution values, which can distort their magnitude.
This often necessitates a multi-step re-fitting procedure to correct for the shrinkage, undermining the goal of a unified estimation. More fundamentally, this entire approach relies on the $\ell_1$-norm's ability to select the correct features, an inherent drawback as it often fails to recover the true support, especially in the presence of correlated features \citep{zhang2010nearly}.

To the best of our knowledge, \emph{no} widely adopted framework integrates direct, shrinkage-free sparsity control as an intrinsic property into Shapley-value estimation, let alone in the context of an unknown transformation.
These challenges underscore the need for a unified approach that \textit{simultaneously} enforces sparsity and ensures coherent Shapley-based attributions.
\\

This paper aims to develop a novel nonlinear explanation framework for applying the Shapley mechanism in a way that simultaneously   aligns individual feature contributions with appropriately transformed worths across all subsets, and  promotes sparsity by eliminating irrelevant features to enhance both computational efficiency and statistical accuracy. The contributions of our work are as follows:
\begin{itemize}
    \item  Our research is the {first to  demonstrate} that common factors such as the {presence of irrelevant features} and {inter-feature dependencies} can induce a  payoff transformation that {deviates substantially from linearity}, even when using standard payoff constructions (e.g., $R^2$-based worths). This finding underscores the  need for nonlinear explainability frameworks.

    \item We propose Sparse Isotonic Shapley Regression (\textbf{SISR}), the first framework to \textit{jointly} address payoff non-additivity and attribution sparsity. By learning a monotonic transformation and enforcing an $\ell_0$  constraint simultaneously, our integrated approach overcomes the limitations of  ad-hoc methods.

    \item SISR   learns the   transformation of payoffs {{without requiring a predefined analytical form}}. This is achieved through efficiently leveraging the  {Pool-Adjacent-Violators} algorithm, allowing the model to adapt to diverse  real-world payoff structures.


    \item   The optimization algorithm developed for SISR features simple, {closed-form} updates and is accompanied by {global convergence guarantees}.
The incorporation of sparsity improves computational efficiency.
    \item  Through extensive experiments across various datasets   and payoff schemes, we show that SISR significantly stabilizes feature attributions and correctly identifies relevant features, mitigating the severe rank and sign distortions often observed with standard Shapley value applications.
\end{itemize}

Our goal with SISR is not to abandon the interpretability of additivity, but rather to restore it. While other valuable frameworks  explicitly model  higher-order feature interactions (cf. Section \ref{sec:ext}), this paper proposes a distinct alternative that seeks to \textit{learn} a principled monotonic transformation that maps the payoff function \textit{back} to a domain where a simple, additive main-effect structure holds. This preserves the interpretability of the original Shapley framework while robustly handling payoff constructions driven by non-Gaussian distributions and domain-specific loss scales  that violate its core assumptions.

The rest of the paper is organized as follows. Section~\ref{sec:method} proposes a novel  Sparse Isotonic Shapley Regression    model to address challenges related to domain adaptation and high dimensionality. In Section~\ref{sec:comp}, an optimization-based algorithm is developed to address the functional challenge and the nonsmooth sparsification, with  established theoretical guarantees. Section~\ref{sec:experiments}  provides valuable data-driven insights drawn from  experiments in various scenarios. Extensions are given in Section \ref{sec:ext}. We conclude in Section \ref{sec:summ}.

\section{Proposed Method}\label{sec:method}
The Shapley axioms and principles have been interpreted by economists in various ways \citep{algaba2019handbook}. Here, we recast the Shapley framework as a statistical assumption on the data-generating process. To begin, let us revisit a motivating \textit{weighted least squares} formulation of Shapley value estimation as derived in \cite{Lundberg2017}, echoing earlier developments in econometrics \citep{Charnes1988}:
\begin{align}
\begin{split}
\min_{\beta\in \mathbb R^p, c\in\mathbb R} &\sum_{A \in 2^F,\, A \neq \emptyset,\, A \neq F} w_{\text{\tiny SH}}(A) \left( \nu_A - \sum_{j \in A} \beta_j - c \right)^2 \\
&\text{subject to} \quad c = \nu_\emptyset, \quad c + \sum_{j=1}^p \beta_j = \nu_F,
\end{split}
\label{wls-shapley}
\end{align}
where the Shapley weights are given by
\begin{align}
w_{\text{\tiny SH}}(A) = \frac{p-1}{{p \choose |A|} |A| (p - |A|)}.
\label{shapleyweights}
\end{align}
Perhaps surprisingly, it can be shown that the optimal solution \( \hat{\beta} \) to \eqref{wls-shapley} recovers the exact Shapley values  \citep{Lundberg2017}, which are traditionally derived based on the concept of \textit{marginal  contributions} across all possible feature coalitions.

If we define
\begin{align}
w_{\text{\tiny SH}}(\emptyset) = +\infty, \quad w_{\text{\tiny SH}}(F) = +\infty \label{bigweights}
\end{align} as an extension of \eqref{shapleyweights}, then  \eqref{wls-shapley} can be written as
\begin{align*}
 \min_{\beta,c} &\sum_{A\in 2^{F}}  w_{\text{\tiny SH}}(A) (\nu_A - \sum_{j\in A} \beta_j -c)^2
\end{align*}
where $A$ can take any subset of the power set $2^F$. Note that when $A = \emptyset$, $ \sum_{j\in A} \beta_j=0$ by convention and   $\hat c = \nu_\emptyset$. It is thus convenient to define the \emph{baseline-adjusted} coalition values: \begin{align} \nu_A^{c} = \nu_A - \nu_\emptyset, \quad \forall A \in 2^F, \label{nucenter} \end{align} which saves one parameter in the subsequent optimization: \begin{align*} \min_{\beta} \sum_{A\in 2^{F}} w_{\text{\tiny SH}}(A) \left( \nu_A^c - \sum_{j\in A} \beta_j \right)^2. \end{align*}For notational simplicity, we will   write `$\nu_A^c$' as just `$\nu_A$', assuming that all $\nu$ values have been   properly shifted  according to   \eqref{nucenter} unless otherwise specified.

It is helpful to reinterpret the weighted least squares formulation of Shapley values as a probabilistic model:
\begin{align}
\begin{split}
&\nu_A \sim \mathcal N(\mu_A,  \sigma_A^2) \\
& \mu_A = \sum_{j\in A} \beta_j^*, \\
&\sigma_A^2 \propto {p\choose |A|} |A| (p - |A|)\, (\propto \frac{1}{w_{\text{\tiny SH}}(A)}),
\end{split}\label{shapleygaussmodel}
\end{align}
and all $\nu_A$'s are independent. Here, $\beta_j^*$ denotes the true Shapley value for the $j$th feature.

By reformulating the Shapley axioms and principles into  assumption    \eqref{shapleygaussmodel}, we gain insight into why numerous payoff  functions may not meet the  model criteria. Indeed, due to issues such as range constraints, skewness, heavy tails, and heterogeneity, it is natural to question the appropriateness of the   multivariate Gaussianity across different definitions  of coalition values.

In our view, one viable solution is to apply a transformation that promotes  Gaussianity. Let's consider an alternative   Shapley value model in a \textit{transformed domain}:
\begin{align}
T(\nu_A) \sim \mathcal{N}( \sum_{j \in A} T(\beta_j^*), \, \sigma_A^2 ),
\label{TShap}
\end{align}
where \( T(\cdot) \) is an unknown transformation.  Under this model,
\begin{align}
\mathbb{E}\left[ T(\nu_A) \right] = \sum_{j \in A} T(\beta_j^*),
\end{align}
which defines a ``$T$-additive'' framework for nonlinear settings.
To model this structure, we propose a new Shapley framework termed  {Functional Shapley Regression}, which jointly estimates \( \beta \) and \( T(\cdot) \) by solving
\begin{align}
\min_{\beta, T(\cdot)} \sum_{A \in 2^{F}} w_{\text{\tiny SH}}(A) \Big\{ T(\nu_A) - \sum_{j \in A} T(\beta_j) \Big\}^2 \  \text{subject to} \  \beta \in \mathcal{C}, \, T(\cdot) \in \mathcal{T},
\label{functional-shapley}
\end{align}
where the objective minimizes the Shapley-weighted sum of squared differences between the transformed coalition values \( T(\nu_A) \) and the transformed linear sum \( \sum_{j \in A} T(\beta_j) \) over all subsets \( A \in 2^{F} \). Here, we use  \( \mathcal{C} \subseteq \mathbb{R}^p \) to denote the constraint set for \( \beta \), and \( \mathcal{T} \) to denote the class of admissible transformation functions.
By the notational convention for \( A = \emptyset \), \eqref{functional-shapley} automatically enforces $$ T(0) = 0 ,$$ corresponding to   \( T(\nu_\emptyset) = 0 \) (recall all $\nu_A$ have been centered).

The remark below illustrates that  our framework, in contrast to the common additive main-effect model   (see, e.g.,  \cite{Lundberg2017}), accommodates a broader range of multivariate payoff structures. By learning a transformation to restore the Shapley framework's underlying statistical assumptions (Gaussianity), the resulting $T^{-1}$-sum-$T$ structure in \eqref{VinT}     enables \textit{nonlinear} explainability.

\begin{remark}[\textbf{Univariate $T$-Mappings} for \textbf{Multivariate Structure}]
Introducing a \emph{univariate} transformation \( T(\cdot) \)   enables a remarkably rich class of models capable of capturing   complex \emph{multivariate} relationships between \( \nu_A \) and \( \{ \beta_j : j \in A \} \), well beyond the standard additive form.

Specifically, under the \( T \)-transformed model \eqref{TShap},  assuming the existence of the inverse transformation    \( T^{-1} \)   and using the notation $V_A$ (cf. \eqref{nu-V}) we have
\begin{align}
 V_A (\{\beta_j\}_{j \in A}) =  T^{-1}\Big( \sum_{j \in A} T(\beta_j) \Big) \quad \emph{ or } \quad \nu_A\sim T^{-1}\Big( \sum_{j \in A} T(\beta_j) \Big).\label{VinT}
\end{align}
If \( T \) is a nondegenerate linear map like the identity map,   the model reduces to the conventional additive Shapley game, where the coalition value $\nu_A$ in \eqref{VinT} is essentially  a simple sum of individual contributions. However,  a general transformation lends the  $T^{-1}$-$\Sigma$-$T$  multivariate structure of \eqref{VinT} the flexibility to  model a broad range of application domains. By \emph{learning} the transformation from the data, our framework acts as a robust generalization of the conventional model, rather than imposing a forced, arbitrary transformation.

For instance, consider a monomial transformation \( T(x) = |x|^d \) for some \( d > 0 \),  which, under the mild assumption of non-negative payoffs, induces a multivariate ``\( d \)-norm" relationship:
\[
 V_A (\{\beta_j\}_{j \in A})= \big( \sum_{j \in A} |\beta_j|^d \big)^{1/d} = \| \beta_A \|_d.
\]
Varying the degree \( d \) recovers a spectrum of geometric structures, e.g.,
\begin{itemize}
    \item[(i)] \( d = 1 \): the \(\ell_1\)-norm polytope, \( V_A (\{\beta_j\}_{j \in A})= \sum_{j \in A} |\beta_j| \);
    \item[(ii)] \( d = 2 \): the \(\ell_2\)-norm ball, \( V_A (\{\beta_j\}_{j \in A}) =  \big( \sum_{j \in A} \beta_j^2  \big)^{1/2} \);
    \item[(iii)] \( d \to \infty \): the \(\ell_\infty\)-norm cube, \( V_A (\{\beta_j\}_{j \in A})= \max_{j \in A} |\beta_j| \).
\end{itemize}
In particular, under nonnegativity constraints (\( \nu_A \geq 0 \), \( \beta_j \geq 0 \)), the \(\ell_\infty\) case corresponds to the \emph{winner-takes-all} mechanism as first noted in \cite{fryer2021shapley}, where the coalition value  is dominated by the largest individual contribution (practically, monomial transformations with   large degrees \( d \) can closely approximate  such behavior). This is  motivating, as the examples here are highly nonlinear and  incompatible with a linear Shapley game. Yet, with a  univariate transformation, they can be incorporated into the   $T$-Shapley framework. Additional examples are the exponential form   $T(x) = \exp(x)-1$  and the odds form $T(x) = \Phi(x)/(1-\Phi(x))$ with $\Phi$ a distribution function of a continuous random variable.

In sum, an appropriately chosen   \( T(\cdot) \) establishes  a versatile \emph{nonlinear} modeling mechanism that  enhances the additive expressiveness of Shapley values for  XAI. Another advantage of the proposed approach is that it \emph{\textbf{bypasses}} the need for a predefined analytic transformation, instead learning it directly from the data (cf. Section \ref{sec:comp}). This novel capability of ``\emph{learning to be additive}' to   recover a linear, main-effect Shapley structure is a key contribution of the framework. \end{remark}


In this paper, we focus on a  specific instance   of \eqref{functional-shapley}, referred to as the ``{{\textbf{S}parse \textbf{I}sotonic \textbf{S}hapley \textbf{R}egression}}'' (\textbf{SISR}):
\begin{align}
\begin{split}&
\min_{\beta, T(\cdot)} \sum_{A\in 2^{F}} w_{\text{\tiny SH}}(A) \Big\{ T(\nu_A )- \sum_{j\in A} T(\beta_j) \Big\}^2 \\
&\mbox{ subject to } \| \beta\|_0\le s, T\in \mathcal M,  \sum_{j = 1}^p  (T  (\beta_j))^2    =  1,
\end{split}
\label{spaiso-shapley}\end{align}
where  $\mathcal M$   denotes the class of strictly increasing functions and $1\le s \le p$ specifies the user-defined upper bound on the true model sparsity. Notably, this objective  is   defined on the $T(\nu_A)$ scale, as this is the domain where the Gaussian error assumption holds (cf. \eqref{TShap}); applying a quadratic loss to the original $\nu_A$ scale would be statistically inconsistent.
\eqref{spaiso-shapley} incorporates three critical modeling considerations.
\paragraph*{Monotonicity.}
We impose a monotonicity constraint on \( T(\cdot) \) to preserve the relative ordering of feature importance values:
\[
\beta_i \geq \beta_j \quad \Rightarrow \quad T(\beta_i) \geq T(\beta_j).
\]
This ensures that the learned transformation respects the relative contribution levels of individual features.  The structure closely resembles \textbf{isotonic regression} \citep{robertson1988order}, which seeks a weighted least squares fit under monotonicity constraints and has widespread applications in    psychometrics,  epidemiology, yield curve estimation, risk modeling and credit scoring  studies. Compared with enforcing smoothness in    $T$, our monotonicity approach avoids any need for a basis expansion or other parametric representation. Pursuing $T$ reinterprets the data in a transformed domain where feature contributions recover an additive Shapley structure.

\paragraph*{Normalization.} A  normalization is   imposed on the transformed feature contributions, $\sum_{j = 1}^p (T(\beta_j))^2 = 1.$
This prevents degeneracy (e.g., trivial solutions such as \( T \equiv 0 \)) and anchors the scale of the model.  An appealing feature of~\eqref{spaiso-shapley} is its invariance to the overall scaling of $\{\nu_A\}$, and the normalization constant is fixed at 1 without loss of generality.   Moreover, Section~\ref{sec:comp} will show that imposing such a spherical constraint  yields  computational benefits, enabling a closed-form solution for the attribution update  and improving    implementability.

\paragraph*{Sparsity.}
\eqref{spaiso-shapley} directly incorporates \textbf{sparsity} into the Shapley estimation process. Rather than relying on multi-step methods that first estimate a dense Shapley vector and then rank features \citep{slack2020fooling}, the formulation   constrains the support of \( \hat{\beta} \)  while pursuing the  transformation   {during} the iterative optimization process (cf.  Algorithm \ref{Alg:SpaIso}). This \textit{unified} treatment ensures that sparsity, domain adaptation, and Shapley coherence are achieved simultaneously, avoiding  inconsistencies  by post hoc selection.
The  popular  \(\ell_1\)-penalty $\lambda \sum |\beta_j |$  requires cumbersome    $\lambda$-tuning   and induces unwanted shrinkage. For example, it is generally unclear \textit{a priori} how many nonzero coefficients result from a particular choice of  $\lambda$.  But our $\ell_0$ constraint offers direct control over model sparsity and is entirely shrinkage-free, avoiding the attribution-distorting bias of $\ell_1$-type methods and the need for multi-step re-fitting procedures. It  further remedies a well-documented limitation of  $\ell_1$ selection, which often fails to recover the true support in the presence of even moderately correlated features  \citep{Zhao2006,zhang2010nearly}. These properties make \eqref{spaiso-shapley} particularly appealing for applications such as bioinformatics, where practitioners frequently require a fixed number of interpretable predictors. For cases where $s$ must be chosen, we find the RIC criterion \citep{Foster94} to be effective within our Shapley framework.      \\

Before concluding this section, we introduce a \textit{reparameterization} trick that proves   beneficial for both modeling and computation. Define
\begin{align}
\gamma_j = T(\beta_j).
\end{align}
Since   \(T\) is strictly increasing and \(T(0) = 0\) (the loss would become infinite if \(T(0) \ne 0\)),    \eqref{spaiso-shapley} can be rewritten as \begin{align}
 \min_{\gamma, T(\cdot)} \quad  \sum_{A \in 2^{F}} w_{\text{\tiny SH}}(A)  ( T(\nu_A ) - \sum_{j \in A} \gamma_j  )^2  \text{ s.t. }    \| \gamma \|_0 \le s, \| \gamma \|_2 = 1,  T \in \mathcal{M}.
\label{prob-reparam}
\end{align}
The corresponding model assumption   is thus  $
  T^*(\nu_A) \sim \mathcal{N} ( \sum_{j \in A} \gamma_j^*,\ \sigma_A^2  )$   for all $ A \in 2^{F}, 
$ 
where the genuine transformation function $T^*$ is monotonic with $T^*(0) = 0$, and   $ \nu_A$ are assumed to be independent across different subsets $A$. The \emph{starred} quantities represent the underlying statistical truth of interest to estimate. Assume   $\gamma^* \in \mathbb{R}^p$  satisfies   $\| \gamma^* \|_0 \leq s^*$ and $\| \gamma^* \|_2 = 1$ with $1\le s^*\le p$ and $s$  is specified as an upper bound on $s^*$. After estimating \( \hat{\gamma} \) and \( \hat{T} \) from \eqref{prob-reparam}, one can recover the   $\beta$-scores by applying the inverse transformation
$
\hat{\beta}_j = \hat{T}^{-1}(\hat{\gamma}_j).
$
This reconstructs the multivariate relationship between \( \nu_A \) and the set of feature contributions in the original scale, yielding   $\nu_A\approx \hat T^{-1}(\sum_{j\in A} \hat T(\hat \beta_j))$, to  offer interpretable Shapley-based attributions.

\section{Optimization Algorithm}\label{sec:comp}
The optimization of SISR involves two main challenges: (i) a functional estimation component, and (ii) a combinatorial sparsity constraint coupled with a nonconvex normalization constraint. We show that the functional challenge can be   addressed by a discretization technique, which, rather than introducing an approximation, preserves full equivalence.   To handle the two constraints  on $\gamma$, we develop a  surrogate function framework. These efforts  lead to an iterative procedure that combines the pool-adjacent-violators  with a normalized hard thresholding. Each step  has implementation ease and  the sparse structure  ensures that the overall algorithm remains  efficient  in high-dimensional settings.

First, since \(T(\cdot)\) is only evaluated at the observed values \(\nu_A\) in the objective function, we   ``discretize'' \eqref{prob-reparam} by introducing the vector $$t =\big [T(\nu_A)\big]_{A \in 2^F} \in \mathbb{R}^{2^p}.$$ In defining this vector, one should fix a specific order over subsets \(A \subseteq F\); we follow the conventional lexicographic binary ordering to arrange the entries of \(t\).
Correspondingly, we define $$
\nu = \big[\nu_{ A}\big]_{A \in 2^F}   \in \mathbb{R}^{2^p},\quad \delta = \Big[\sum_{j \in A} \gamma_j\Big]_{A \in 2^F}   = Z\gamma\in \mathbb{R}^{2^p},  $$ where \(Z \in \mathbb{R}^{2^p \times p}\) is the ``incidence matrix'' indicating which features are active in each subset \(A\), aligned with the same ordering used for \(t\). Henceforth, we  also  write $\nu_i$  (and likewise $\delta_i$) to denote the entry corresponding to the $i$th subset. Additionally, we introduce the diagonal weight matrix
\begin{align}
W = \text{diag}\{w_{\text{\tiny SH}}(A)\}_{A \in 2^F}. \label{Wdef}
\end{align}

With this notation in place, we study the following optimization problem:
\begin{align}
\begin{split}
\min_{\gamma \in \mathbb{R}^p,\; t \in \mathbb{R}^{2^p}} \quad & \frac{1}{2} (t - \delta)^\top W (t - \delta) \\
\text{subject to} \quad & \delta = Z\gamma,\quad \| \gamma \|_0 \le s,\quad \| \gamma \|_2 = 1, \\
& t_i \le t_j \quad \text{for all } (i,j) \in E(\nu)= \{(i, j) : \nu_i \le \nu_j\},
\end{split}
\label{prob-opt}
\end{align}
where \(E \) encodes the pairwise ordering constraints induced by $\nu$, due to the monotonicity of the transformation \(T\). This formulation replaces strict monotonicity with a non-decreasing constraint, a mild adjustment that facilitates numerical implementation.
 In the remainder of the section, we design a two-block alternating optimization algorithm. 

First, with \(\delta\) fixed, the optimization over \(t\) corresponds to   the   (weighted) \textit{isotonic regression}
\begin{align}
\min_{  t \in \mathbb{R}^{2^p}} \quad & \frac{1}{2} (t - \delta)^\top W (t - \delta)
\text{ subject to }   \  t_i \le t_j \ \text{for all } (i,j) \in E, \label{subopt-iso}
\end{align}
 where the goal is to obtain a monotonic fit to \(\delta\) under a weighted squared-error loss defined by \(W\). The problem can be solved using any standard Quadratic Programming ({QP}) solver, but it is more efficiently handled by the Pool-Adjacent-Violators Algorithm (\textbf{PAVA}) \citep{JSSv032i05}, which leverages the structure of the monotonicity constraints for improved computational performance.

Next, we focus on  the $\gamma$-optimization.

\begin{theorem}\label{th:normHT}
Let \(\mathcal  H(\cdot; s)\) denote the \emph{hard-thresholding} operator associated with cardinality \(s\), defined as follows: for a vector \(y \in \mathbb{R}^p\), \(\mathcal  H(y; s) = z\) where \(z_i = y_i\) if \(|y_i|\) is among the \(s\) largest entries of \(|y_1|, \ldots, |y_p|\), and \(z_i = 0\) otherwise, and the  \emph{normalized hard-thresholding} operator  \({\mathcal  H}^\circ (y; s)={\mathcal H}(y; s)/\|{\mathcal H}(y; s)\|_2\) if $\|{\mathcal H}(y; s)\|_2 \ne  0$.
Then, for the optimization problem with $y \ne \vec 0$,  $1\le s \le p$,
\[
\min_\beta \frac{1}{2} \| y - \beta \|_2^2 \quad \text{subject to} \quad \|\beta\|_0 \le s,\ \|\beta\|_2 = 1,
\]
the vector obtained by normalized hard-thresholding, $$\hat \beta  ={\mathcal H}^\circ(y; s)= \frac{\mathcal H(y; s)}{\| \mathcal  H(y; s) \|_2}$$ is a global optimizer.
\end{theorem}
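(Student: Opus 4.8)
The plan is to reduce the given nonconvex problem to the maximization of a linear functional, and then to decouple the combinatorial choice of support from the continuous choice of direction on that support. First I would exploit the equality constraint $\|\beta\|_2 = 1$ to expand
\[
\tfrac{1}{2}\|y-\beta\|_2^2 = \tfrac12\|y\|_2^2 - \langle y,\beta\rangle + \tfrac12\|\beta\|_2^2 = \tfrac12\big(\|y\|_2^2 + 1\big) - \langle y,\beta\rangle,
\]
so that over the feasible set the problem is equivalent to maximizing $\langle y,\beta\rangle$ subject to $\|\beta\|_0\le s$ and $\|\beta\|_2=1$. This removes the quadratic term and leaves only an inner product to maximize.

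Next I would fix a candidate support $S\subseteq\{1,\dots,p\}$ with $|S|\le s$ and optimize over unit vectors $\beta$ supported on $S$. Writing $y_S$ for the restriction of $y$ to $S$, Cauchy--Schwarz gives $\langle y,\beta\rangle=\langle y_S,\beta\rangle\le \|y_S\|_2\,\|\beta\|_2=\|y_S\|_2$, with equality precisely when $\beta = y_S/\|y_S\|_2$ (assuming $y_S\neq\vec 0$). Hence the best attainable objective on support $S$ equals $\|y_S\|_2=\big(\sum_{i\in S}y_i^2\big)^{1/2}$, attained by the normalized restriction of $y$ to $S$.

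It then remains to optimize over the finite family of admissible supports, i.e.\ to maximize $\sum_{i\in S}y_i^2$ over all $S$ with $|S|\le s$. Because the summands $y_i^2$ are nonnegative, the maximum is achieved by selecting the $s$ indices with the largest values of $|y_i|$---which is exactly the support of $\mathcal H(y;s)$---and the corresponding optimal direction is $\mathcal H(y;s)/\|\mathcal H(y;s)\|_2=\mathcal H^\circ(y;s)$. Since $y\neq\vec 0$ and $s\ge1$, the largest-magnitude entry is nonzero and belongs to this support, so $\mathcal H(y;s)\neq\vec 0$ and the normalization is well defined; this would establish that $\hat\beta=\mathcal H^\circ(y;s)$ is a global optimizer.

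The main obstacle is conceptual rather than computational: both constraints $\|\beta\|_0\le s$ and $\|\beta\|_2=1$ are nonconvex, so one cannot appeal to convex duality or KKT sufficiency to certify global optimality. The resolution is the two-stage decomposition above, which isolates the only genuinely combinatorial aspect---the choice of the $s$ active coordinates---and solves the continuous subproblem on each fixed support in closed form via Cauchy--Schwarz. A minor point to handle carefully is the nonuniqueness caused by ties in $|y_i|$: when several coordinates share the threshold magnitude, any valid tie-breaking yields the same optimal value $\sum_{i\in S}y_i^2$, so $\mathcal H^\circ(y;s)$ under any fixed selection rule remains globally optimal even though the optimizer itself need not be unique.
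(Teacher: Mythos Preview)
Your proof is correct and follows essentially the same approach as the paper's: both expand the squared loss using $\|\beta\|_2=1$ to reduce to maximizing $\langle y,\beta\rangle$, apply Cauchy--Schwarz on a fixed support to obtain $\|y_S\|_2$, and then select the support of the $s$ largest $|y_i|$. Your version is slightly more thorough in explicitly verifying that $\mathcal H(y;s)\ne\vec 0$ and in addressing ties, but the argument is the same.
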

\begin{proof}
Let   $A\subseteq \{1, \ldots p\}$ and assume $\beta_{A^c} = 0$ and $\| \beta\|_2 =1$. Because
\begin{align*}
 \| y - \beta \|_2^2 = & \| y \|_2^2 + \|\beta\|_2^2 - 2\langle y, \beta\rangle  \\= & \| y \|_2^2 + 1 - 2\langle y, \beta\rangle \\
 = & \| y \|_2^2 + 1 - 2\langle y_A, \beta_A\rangle \\
  \ge & \| y \|_2^2 + 1 - 2 \|y_A\|_2 \| \beta_A\|_2 =  \| y \|_2^2 + 1 - 2 \|y_A\|_2,
\end{align*}
where we used the Cauchy-Schwarz inequality and the equality is achieved at $\beta_A = y_A/\| y_A\|_2$.

Therefore, $\min_{\beta: \beta_{A^c} =0, \|\beta\|_2 = 1}   \| y - \beta \|_2^2 = \| y \|_2^2 + 1 - 2 \|y_A\|_2 $ for any $A: |A| = s$. Minimizing over $A$ gives an index set corresponding to  the \(s\) largest entries of \(|y_1|, \ldots, |y_p|\),  thereby the normalized hard thresholding operator ${\mathcal H}^\circ(y; s)$.
\end{proof}

We are now ready to develop an iterative algorithm for updating \(\gamma\) with \(t\) held fixed. Define the objective function
\[
l(\gamma) = \frac{1}{2} (  Z\gamma - t)^\top W (  Z\gamma - t). 
\]
A straightforward calculation yields the gradient:
\begin{align}
\nabla l(\gamma) = Z^\top W (Z\gamma - t).\label{gradform}
\end{align}
To facilitate optimization, we construct a new ``surrogate function'':
 \begin{align}
g(\gamma, \gamma^-) = l(\gamma^-) + \langle \nabla l(\gamma^-), \gamma - \gamma^- \rangle + \frac{\rho}{2} \| \gamma - \gamma^- \|_2^2. \label{surrofunc}
\end{align}
where   \(\rho > 0\) should be properly large (cf. Theorem \ref{th:gammaoptalg}).
Define an iterative scheme:
\[
\gamma^{(k+1)} = \arg\min_\gamma\; g(\gamma, \gamma^{(k)}) \quad \text{subject to } \|\gamma\|_0 \le s,\; \|\gamma\|_2 = 1.
\]
Using Theorem~\ref{th:normHT}, the update step admits a closed-form expression:
\begin{align}
\begin{split}
&\gamma^{(k+1)} ={\mathcal H}^\circ(y; s)= \frac{\mathcal H(y; s)}{\|\mathcal H(y; s) \|_2}, \text{ with }\\
& \qquad y = \gamma^{(k)} - \frac{1}{\rho} \nabla l(\gamma^{(k)}) = \gamma^{(k)} - \frac{1}{\rho} Z^\top W(Z\gamma^{(k)} - t). \end{split}\label{gammaiters}
\end{align}

\begin{theorem}\label{th:gammaoptalg}
Let \(\rho \ge \|Z^\top W Z\|_2\), where \(\| \cdot \|_2\) denotes the matrix spectral norm. For any initial point \(\gamma^{(0)}\) satisfying \(\|\gamma^{(0)}\|_0 \le s\) and \(\|\gamma^{(0)}\|_2 = 1\), the sequence \(\{\gamma^{(k)}\}\) generated by \eqref{gammaiters} produces non-increasing (and thus convergent) function values:
\[
l(\gamma^{(k+1)}) \le l(\gamma^{(k)}) \quad \text{for all } k \ge 0.
\]
 Furthermore, if   \(\rho > \|Z^\top W Z\|_2\), $\| \gamma^{(k+1)})  - \gamma^{(k)}\|_2\rightarrow 0$ as $k\rightarrow \infty$.
\end{theorem}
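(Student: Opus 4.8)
The plan is to recognize the update \eqref{gammaiters} as a majorization–minimization (MM) scheme and to run the classical descent argument, being careful that the feasible set here is nonconvex. First I would show that the surrogate $g(\cdot,\gamma^-)$ in \eqref{surrofunc} is a genuine global majorizer of $l$ whenever $\rho\ge L:=\|Z^\top W Z\|_2$. Since $l(\gamma)=\tfrac12(Z\gamma-t)^\top W(Z\gamma-t)$ is a convex quadratic with constant Hessian $\nabla^2 l=Z^\top W Z$, its second-order Taylor expansion about $\gamma^-$ is exact, and subtracting $g$ gives
\[
l(\gamma)=g(\gamma,\gamma^-)-\tfrac12\,(\gamma-\gamma^-)^\top\big(\rho I-Z^\top W Z\big)(\gamma-\gamma^-).
\]
Because $Z^\top W Z\succeq 0$ (as $W$ has nonnegative diagonal) its spectral norm equals its top eigenvalue, so $\rho\ge L$ forces $\rho I-Z^\top W Z\succeq 0$ and hence $l(\gamma)\le g(\gamma,\gamma^-)$, with equality at $\gamma=\gamma^-$. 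This descent-lemma inequality is the engine of the whole proof.

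Next I would verify that $\gamma^{(k+1)}$ is the exact global minimizer of $g(\cdot,\gamma^{(k)})$ over $\{\|\gamma\|_0\le s,\ \|\gamma\|_2=1\}$. Completing the square rewrites $g(\gamma,\gamma^{(k)})$ as $\tfrac{\rho}{2}\|\gamma-y\|_2^2$ plus a $\gamma$-independent constant, with $y=\gamma^{(k)}-\rho^{-1}\nabla l(\gamma^{(k)})$ exactly as in \eqref{gammaiters}; Theorem~\ref{th:normHT} then identifies the constrained minimizer as the normalized hard-thresholding $\mathcal H^\circ(y;s)$. In particular each iterate remains feasible, satisfying $\|\gamma^{(k+1)}\|_0\le s$ and $\|\gamma^{(k+1)}\|_2=1$, which is precisely what allows the successive surrogates to be compared.

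Chaining majorization with minimization then yields the monotonicity directly: $l(\gamma^{(k+1)})\le g(\gamma^{(k+1)},\gamma^{(k)})\le g(\gamma^{(k)},\gamma^{(k)})=l(\gamma^{(k)})$, where the first step is the majorizer property and the middle step uses feasibility of $\gamma^{(k)}$ together with optimality of $\gamma^{(k+1)}$. Since $l\ge 0$, the monotone bounded sequence $\{l(\gamma^{(k)})\}$ converges. For the second claim I would, under the strict condition $\rho>L$, extract a quantitative descent: using $l(\gamma^{(k)})=g(\gamma^{(k)},\gamma^{(k)})\ge g(\gamma^{(k+1)},\gamma^{(k)})$ and the exact Taylor identity above,
\[
l(\gamma^{(k)})-l(\gamma^{(k+1)})\ \ge\ g(\gamma^{(k+1)},\gamma^{(k)})-l(\gamma^{(k+1)})\ =\ \tfrac12(\gamma^{(k+1)}-\gamma^{(k)})^\top(\rho I-Z^\top W Z)(\gamma^{(k+1)}-\gamma^{(k)})\ \ge\ \tfrac{\rho-L}{2}\|\gamma^{(k+1)}-\gamma^{(k)}\|_2^2.
\]
Telescoping over $k$ and invoking boundedness-below of $l$ gives $\sum_k\|\gamma^{(k+1)}-\gamma^{(k)}\|_2^2<\infty$, whence the summand tends to $0$.

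The hard part will be conceptual rather than computational: the feasible set is nonconvex (the intersection of the $\ell_0$-ball with the unit sphere), so standard convex proximal-gradient convergence theory does not apply off the shelf. The key realization that rescues the argument is that the MM descent needs only two ingredients, both available here: (i) a valid global majorizer, guaranteed by $\rho\ge L$ irrespective of convexity of the constraint; and (ii) exact global minimization of that majorizer over the feasible set, which is exactly the content of Theorem~\ref{th:normHT}. The one technical point I would flag is the degenerate case $\mathcal H(y;s)=\vec 0$, where the normalization and hence the update are undefined; this would require $\gamma^{(k)}=\rho^{-1}\nabla l(\gamma^{(k)})$, and I would note that it can be excluded under the problem's nondegeneracy (or handled by leaving the iterate fixed), so it affects neither the monotonicity nor the asymptotic-difference conclusion.
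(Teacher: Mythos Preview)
Your proposal is correct and follows essentially the same MM/descent argument as the paper's own proof: establish $g(\cdot,\gamma^-)\ge l(\cdot)$ via the quadratic Taylor identity with Hessian $Z^\top W Z$, invoke optimality of $\gamma^{(k+1)}$ over the feasible set (via Theorem~\ref{th:normHT}), chain the inequalities for monotonicity, and extract the $\tfrac{\rho-L}{2}\|\gamma^{(k+1)}-\gamma^{(k)}\|_2^2$ bound to obtain the vanishing-increment conclusion by telescoping. Your additional remarks on why nonconvexity of the constraint set is harmless and on the degenerate case $\mathcal H(y;s)=\vec 0$ are sensible elaborations but do not change the route.
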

\begin{proof}
First, simple algebra shows
\begin{align*}
g(\gamma, \gamma^-) - l(\gamma) &= \frac{\rho}{2} \| \gamma - \gamma^- \|_2^2 - (l(\gamma) - l(\gamma^-) - \langle \nabla l(\gamma^-), \gamma - \gamma^- \rangle )  \\&=  \frac{\rho}{2} \| \gamma - \gamma^- \|_2^2 -   \frac{1}{2} ( \gamma - \gamma^-)^\top H(\xi)(  \gamma - \gamma^-)\\
&=\frac{1}{2} ( \gamma - \gamma^-)^\top (\rho I - H(\xi))(  \gamma - \gamma^-),
\end{align*}
where we applied the mean-value theorem,    $H(\xi)$ denotes the Hessian matrix of $l$ at   $\xi$ which is between   $\gamma$ and $\gamma^-$. Thus under the choice of $\rho$, $l(\gamma^{(k+1)})  \le  g( \gamma^{(k+1)}, \gamma^{(k)}) $ for any $k\ge 0$.

By the optimality of $ \gamma^{(k+1)}$, $g( \gamma^{(k+1)}, \gamma^{(k)}) \le g( \gamma^{(k)}, \gamma^{(k)}) = l( \gamma^{(k)})$ and the first conclusion follows. Moreover, from the inequality:  $l( \gamma^{(k)}) - l( \gamma^{(k+1)}) \ge \frac{1}{2} ( \gamma^{(k+1)} - \gamma^{(k)})^\top (\rho I - H(\xi))( \gamma^{(k+1)} - \gamma^{(k)}) \ge \frac{\rho - \|Z^\top W Z\|_2}{2} \| \gamma^{(k+1)} - \gamma^{(k)}\|_2^2 $, we obtain the second result.
\end{proof}

A summary of our algorithmic procedure is outlined in Algorithm \ref{Alg:SpaIso}. Some practical implementation notes:
(i) The provided values $\nu_A$ have been   baseline adjusted as described in \eqref{nucenter} (i.e., a preprocessing   $\nu_A  \leftarrow \nu_A - \nu_\emptyset$ for all $A \in 2^F$ is assumed).
We take $C$ as  1e+4  if $\| \nu\|_\infty\le 10$.
(ii) For $A = \emptyset$ or $A = F$, although $w_{\text{\tiny SH}}(A) $ takes infinite weights in theory, practically one can  assign a  weight equal to a large multiplier (e.g., 10) times the largest non-infinite weight (cf. ~\eqref{shapleyweights}), which  is often numerically sufficient to  enforce $\hat{T}(\nu_F) \dot = \sum_{j=1}^p \hat{T}(\hat{\beta}_j) $. (iii)
 It is unnecessary to explicitly form the diagonal matrix $W$; only the diagonal weights are required. Likewise, the sparsity of matrix $Z$ can be utilized. Additionally, key quantities such as $Z^\top W$ and $Z^\top W t$ can be precomputed prior to the iterative updates to improve computational efficiency. (iv) In Step 9), we employ a self-implemented, \textit{stack}-based weighted PAVA for improved efficiency.
(v) The paired data $(\nu_i, \hat t_i)$ approximate $T$ and form the basis for visualizing  $\hat T(\cdot)$. A heuristic for $\hat T^{-1}(\cdot)$ involves interpolating the inverted pairs $(\hat{t}_i, \nu_i)$, after averaging the $\nu_i$ values for any duplicate $\hat{t}_i$ coordinates to ensure a well-defined mapping.
Alternatives include fitting a second weighted isotonic regression $G(\hat{\delta})$ to $\nu$, or enforcing strict monotonicity in Step 9 (e.g., with an $\epsilon$-margin constraint) for an  invertible $\hat{T}$.
Overall, Algorithm \ref{Alg:SpaIso} is straightforward to implement and scales very well in practice.

\begin{algorithm}[t!]
    \caption{Sparse Isotonic Shapley Regression (\textbf{SISR}) Algorithm}
    \label{Alg:SpaIso}
    \textbf{Input:} $\nu = [ \nu_A ]_{ A \in 2^F} \in \mathbb R^{2^p}$ (baseline-adjusted,  such that $\nu_\emptyset=0$), sparsity level \(s\), design matrix \(Z \in \mathbb{R}^{2^p \times p}\), diagonal weight  matrix \(W\) (cf.~\eqref{Wdef}), and an initial vector $t^{(0)}\in \mathbb{R}^{2^p}$ (e.g., $C \nu$ with a large $C$  if $\| \nu\|_\infty$ is small, to improve  precision, and $C=1$ otherwise).
    \begin{algorithmic}[1]
        \State Initialize \(t \leftarrow t^{(0)} \), \(\gamma \leftarrow 0\)
\State   \(\rho \leftarrow \|Z^\top W Z\|_2\)
        \Repeat
         \While{not converged}

 \State  \(\xi \leftarrow \mathcal H(\gamma - \frac{1}{\rho} Z^\top W(Z\gamma - t); s)\)

 \State   \(\gamma \leftarrow \frac{ \xi }{\|  \xi  \|_2}\)

\EndWhile
            \State \(\delta \leftarrow Z\gamma\)
            \State Fit isotonic regression ~\eqref{subopt-iso} with $\delta, W, Z$ to update \(t\)
        \Until{convergence}
        \State \Return $t, \gamma$
    \end{algorithmic}
\end{algorithm}

\section{Data-Driven Insights}\label{sec:experiments}
\subsection{Domain Adaptation}

To propose a convenient noisy data generation scheme, let's revisit the statistical model defined in Section \ref{sec:method}, where the expectation \(\mathbb{E}[T^*(\nu)]=Z \gamma^*\), with $T(\cdot)$  applied componentwise. Assume without loss of generality that \(Z\) is structured using a bit generation process, with each row corresponding to the binary representation of \(i-1\) (e.g., the second row is $[1, 0, \ldots, 0]$). For \(\gamma^* = c_{0} [2^0, 2^1, \ldots,2^{p-2}, 2^{p-1}]^\top\), this yields
\[
\mathbb{E}[T^*(\nu)] = c_{0} [0, 1, \ldots, 2^p-2, 2^p-1]^\top
\]
where \(c_{0} = \sqrt{\frac{3}{4^p - 1}}\) ensures that \(\gamma^*\) is normalized. To simulate this in experiments, we generate \textit{noisy} versions \(\nu_A\) for all subsets \(A\) using
\[
\nu = Q(c_1 \cdot \sigma(U)) \in \mathbb{R}^{2^p}
\]
where \(U \in \mathbb{R}^{2^p}\) contains entries uniformly distributed between 0 and \(c_0(2^p - 1)\), approximately \(\sqrt{3}\) when \(p\) is sufficiently large. Here, \(\sigma\) denotes the permutation that {sorts} the elements of \(U\) in ascending order. An accurate estimator, \(\hat{T}\) or \(\hat{t}\), should then closely approximate the inverse transformation
$$
T^* = Q^{-1}/c_1.
$$ The inclusion of \(c_1\) is to ensure flexibility. 

Figure~\ref{fig:unifmodelsimu} presents the results under 6 different functional forms for the true transformation $T^*$: \emph{square root} ($T^* = (\cdot)^{1/2}$), \emph{fifth root} ($T^* = (\cdot)^{1/5}$), \emph{exponential} ($T^* = \exp(\cdot) - 1$), \emph{logarithmic} ($T^* = \log(\cdot +  1)$), \emph{tangent} ($T^* = \tan(\cdot)/c_1, c_1=10$),  and   \emph{normal distribution} ($T^* = \Phi(\cdot+c_2)/c_1, Q(\cdot) = \Phi^{-1}(c_1\cdot ) - c_{2},  c_1=1/\sqrt 3,c_{2} = Q(c_1  \sigma_{\min})$). As pointed out by a reviewer, comparing the estimated $\hat{T}$ directly to the true $T^*$, rather than their inverses on the $\nu$ scale, is  statistically   sound, as this aligns with our  additive Gaussian assumption   \eqref{TShap}.  Encouragingly, across all cases, the estimated transformation  $\hat{T}(\nu)$ closely aligns with the ground-truth $T^*(\nu)$, providing strong empirical evidence for the effectiveness of SISR in accurately recovering the underlying transformation structure.

\begin{figure}[!h]
    \centering
    \begin{subfigure}[b]{0.4\textwidth}
        \centering
        \includegraphics[width=\textwidth]{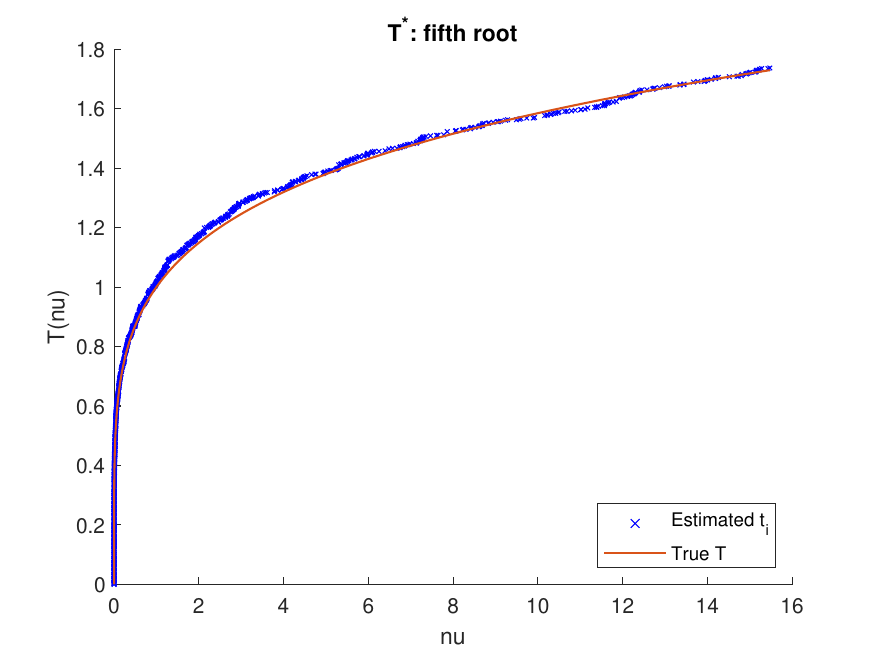}

    \end{subfigure}%
    \begin{subfigure}[b]{0.4\textwidth}
        \centering
        \includegraphics[width=\textwidth]{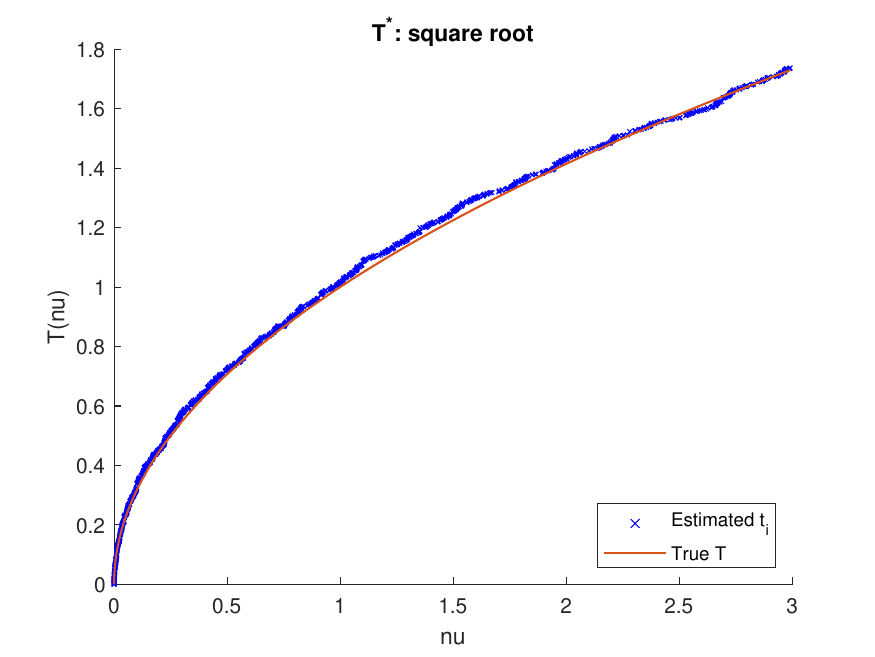}
    \end{subfigure}%
    \\
    \begin{subfigure}[b]{0.4\textwidth}
        \centering
        \includegraphics[width=\textwidth]{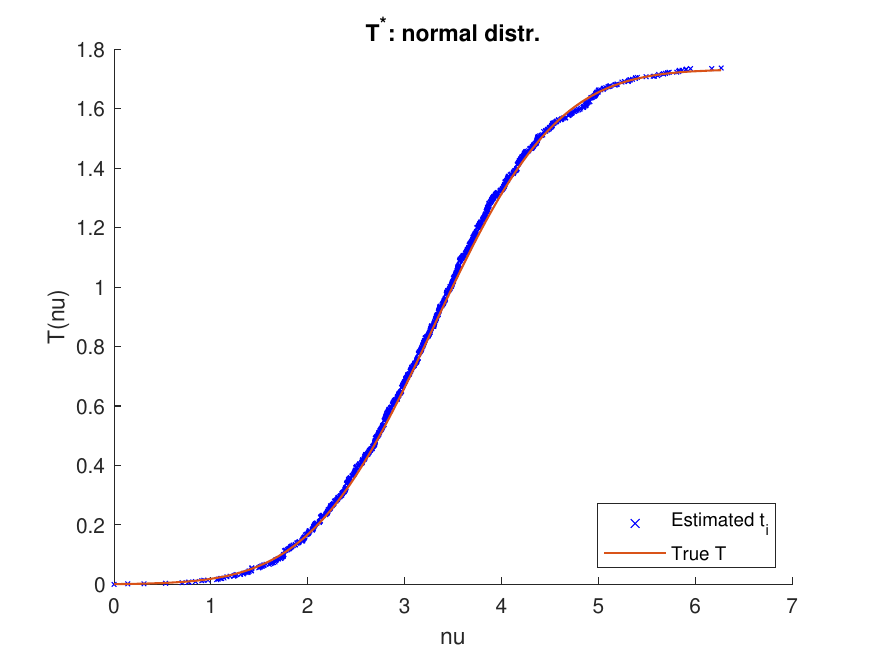}
    \end{subfigure}%
    \begin{subfigure}[b]{0.4\textwidth}
        \centering
        \includegraphics[width=\textwidth]{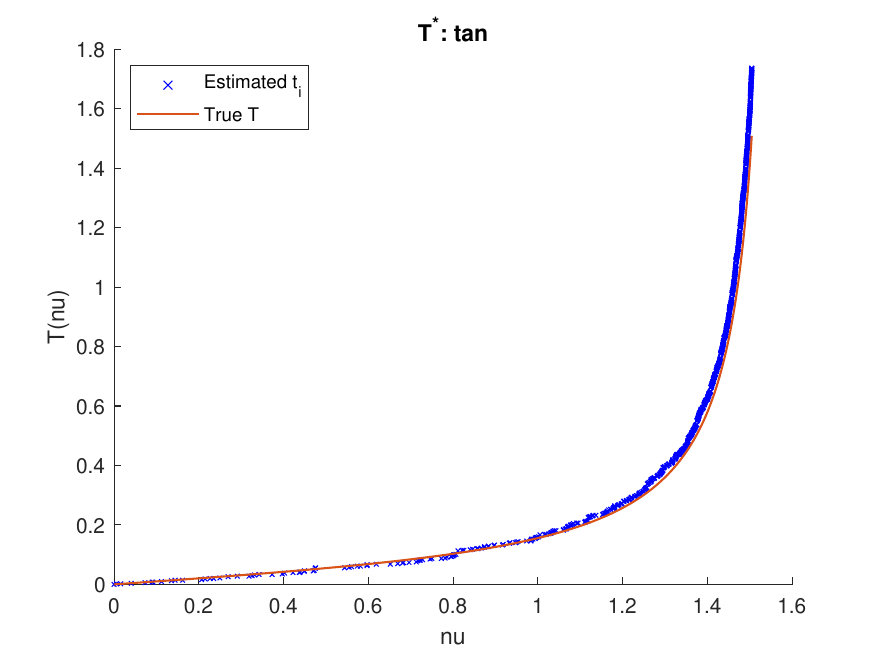}
    \end{subfigure}%
    \\
    \begin{subfigure}[b]{0.4\textwidth}
        \centering
        \includegraphics[width=\textwidth]{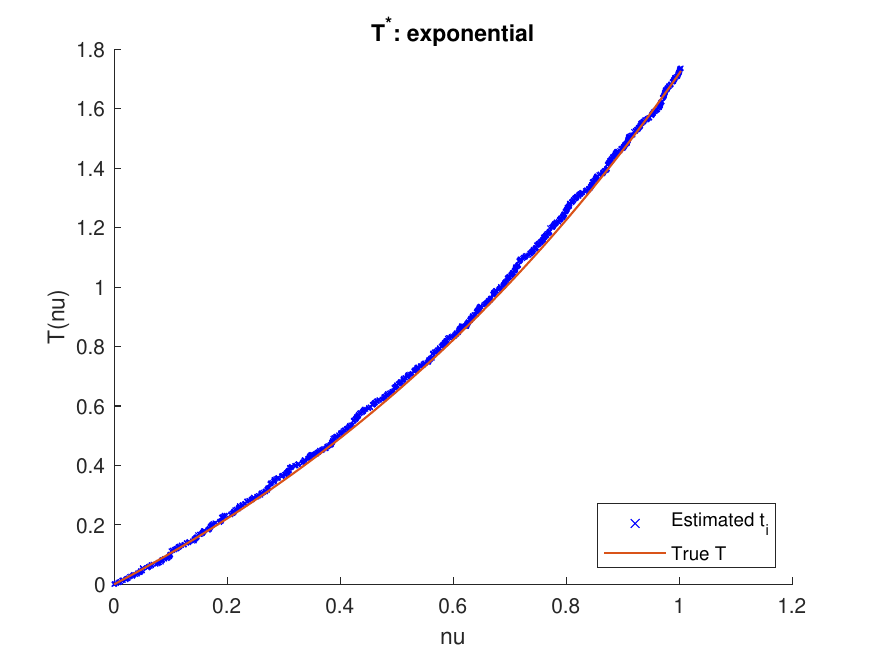}
    \end{subfigure}%
    \begin{subfigure}[b]{0.4\textwidth}
        \centering
        \includegraphics[width=\textwidth]{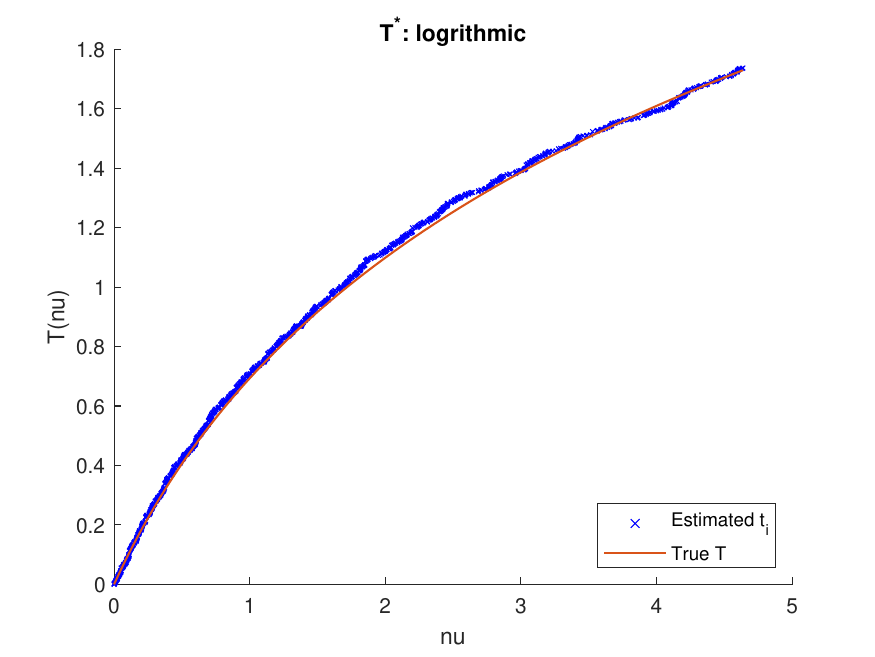}
    \end{subfigure}
    \caption{Estimated monotonic transformation $\hat{T}(\nu)$ (in \textit{blue}) versus the true transformation $T^*$ (in \textit{red}), for  $p=10$ under 6 different functional forms for $T^*$: the fifth root, square root,   normal distribution, tangent, exponential, and logarithmic transformations.     \label{fig:unifmodelsimu}}
\end{figure}

Finally, an additional experiment was conducted using data generated according to \(\nu_A = \max_{j\in A} \beta_j\), where \(\beta_j = j\). The resulting estimated transformation is displayed in Figure~\ref{fig:max}. The recovered transformation exhibits a pronounced increasing trend and  nonlinearity. The estimates are nearly perfectly correlated with the transformed ground truths, and the best-fit line passes through the origin up to a scaling factor.

\begin{figure}[!h]
    \centering
    \includegraphics[width=.48\textwidth]{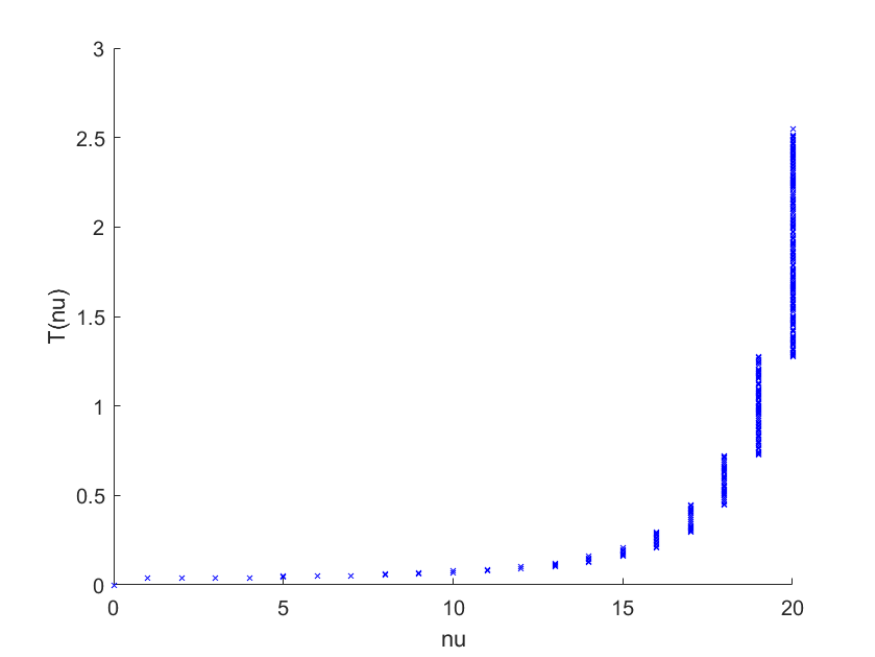}
 \includegraphics[width=.47\textwidth]{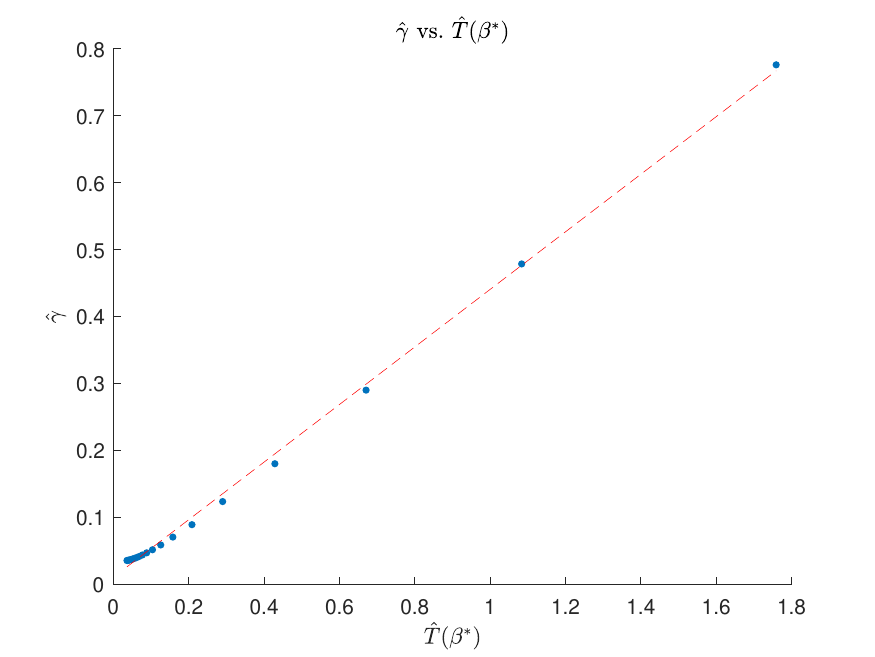}
    \caption{Estimated monotonic transformation \(\hat{T}(\nu)\) (left) and comparison  between  $\hat \gamma$ vs $\hat T(\beta^*)$ (right, showing an almost perfect correlation of 1.00) for \(p=20\) under a winner-takes-all setting. \label{fig:max}}
\end{figure}

\subsection{Sparsity Recovery}
We generate data according to the sparse-$\gamma$ model in Section \ref{sec:method},   with the true transformation set as the cubic root, $T^*(\cdot) = \sqrt[3]{\cdot}$. The true coefficient vector is $\gamma^* = [1/\sqrt{3},\ 1/\sqrt{3},\ 1/\sqrt{3},\ 0, \ldots, 0]^\top$,  a relatively weak signal with sparsity level $s^* = 3$. The noise variance is defined as $\sigma_A^2 = \sigma_0^2 / w_{\text{\tiny SH}}(A)$, with varying values of  $\sigma_0$. The sparsity level upper bound $s$ in running SISR is set to $ 1.5\, s^*$.
Performance is evaluated using two metrics. The first measures the alignment or affinity between the two unit-norm vectors: $\langle \hat{\gamma}, \gamma^* \rangle \times 100$ (denoted by \textbf{Affn}), serving as an index of estimation accuracy.  The second metric is the support recovery rate: $|\text{supp}(\hat{\gamma}) \cap \text{supp}(\gamma^*)| / s^* \times 100\%$ (denoted by \textbf{Supp}), reflecting the proportion of correctly identified nonzero components in the true support.
 Table \ref{tab:suppsimresults} reports results for varying values of $p$ and $\sigma_0$.  All results are averaged over 100 simulation runs.
\begin{table}[ht]
\centering
\caption{Affinity score (\textbf{Affn}) and support recovery rate (\textbf{Supp}) across different values of \( p \) and noise level \( \sigma_0 \). Larger values reflect better performance. }
\begin{tabular}{@{}cccccccc@{}}
\toprule
& \multicolumn{2}{c}{\(\sigma_0=1\text{e-3}\)} & \multicolumn{2}{c}{\(\sigma_0=5\text{e-3}\)} & \multicolumn{2}{c}{\(\sigma_0=1\text{e-2}\)} \\
\cmidrule(lr){2-3} \cmidrule(lr){4-5} \cmidrule(lr){6-7}
& Affn & Supp & Affn & Supp & Affn & Supp \\
\midrule
\( p=10 \) & 99.6 & 100\% & 99.6 & 100\% & 99.5 & 100\% \\
\( p=15 \) & 99.8 & 100\% & 99.9 & 100\% & 97.8 & 100\% \\
\( p=20 \) & 99.9 & 100\% & 87.9 & 100\% & 80.3 & 100\% \\
\( p=25 \) & 87.9 & 100\% & 74.0 & 100\% & 70.5 & 100\% \\
\hline

\hline
& \multicolumn{2}{c}{\(\sigma_0=5\text{e-2}\)} & \multicolumn{2}{c}{\(\sigma_0=1\text{e-1}\)} & \multicolumn{2}{c}{\(\sigma_0=2\text{e-1}\)} \\
\cmidrule(lr){2-3} \cmidrule(lr){4-5} \cmidrule(lr){6-7}
& Affn & Supp & Affn & Supp & Affn & Supp \\
\midrule
\( p=10 \) & 97.9 & 100\% & 88.7 & 98.7\% & 66.2 & 80.7\% \\
\( p=15 \) & 79.9 & 100\% & 70.9 & 98.0\% & 57.6 & 73.3\% \\
\( p=20 \) & 68.9 & 100\% & 63.2 & 96.0\% & 54.3 & 65.3\% \\
\( p=25 \) & 65.5 & 100\% & 60.6   & 90.7\% & 52.1 & 62.0\% \\
\bottomrule
\end{tabular}
\label{tab:suppsimresults}
\end{table}

As shown in the table, both performance metrics decline with increasing feature dimension \( p \) and noise level \( \sigma_0 \), as expected due to greater model complexity and reduced signal-to-noise ratio (SNR). Although not reported, running the algorithm without sparsity enforcement (i.e., \( s = p \)) yields noticeably worse affinity scores in high SNR settings (e.g., the first setting row of Table~\ref{tab:suppsimresults}). 
Compared to the affinity scores, the support recovery rate remains surprisingly strong even under challenging conditions, indicating that SISR consistently identifies the correct features.

We further investigated the impact of the sparsity level $s$ on computational time. In this experiment, we varied $s$ from 5 to 15, while fixing
$s^*=3$,  $p=15$,     $\sigma_0=5e\text{-}3$. As illustrated in the figure, lower sparsity levels generally lead to faster computation, highlighting the efficiency gains achievable when enforcing proper sparsity in the model.

\begin{figure}[!h]
    \centering
    \includegraphics[width=.75\textwidth]{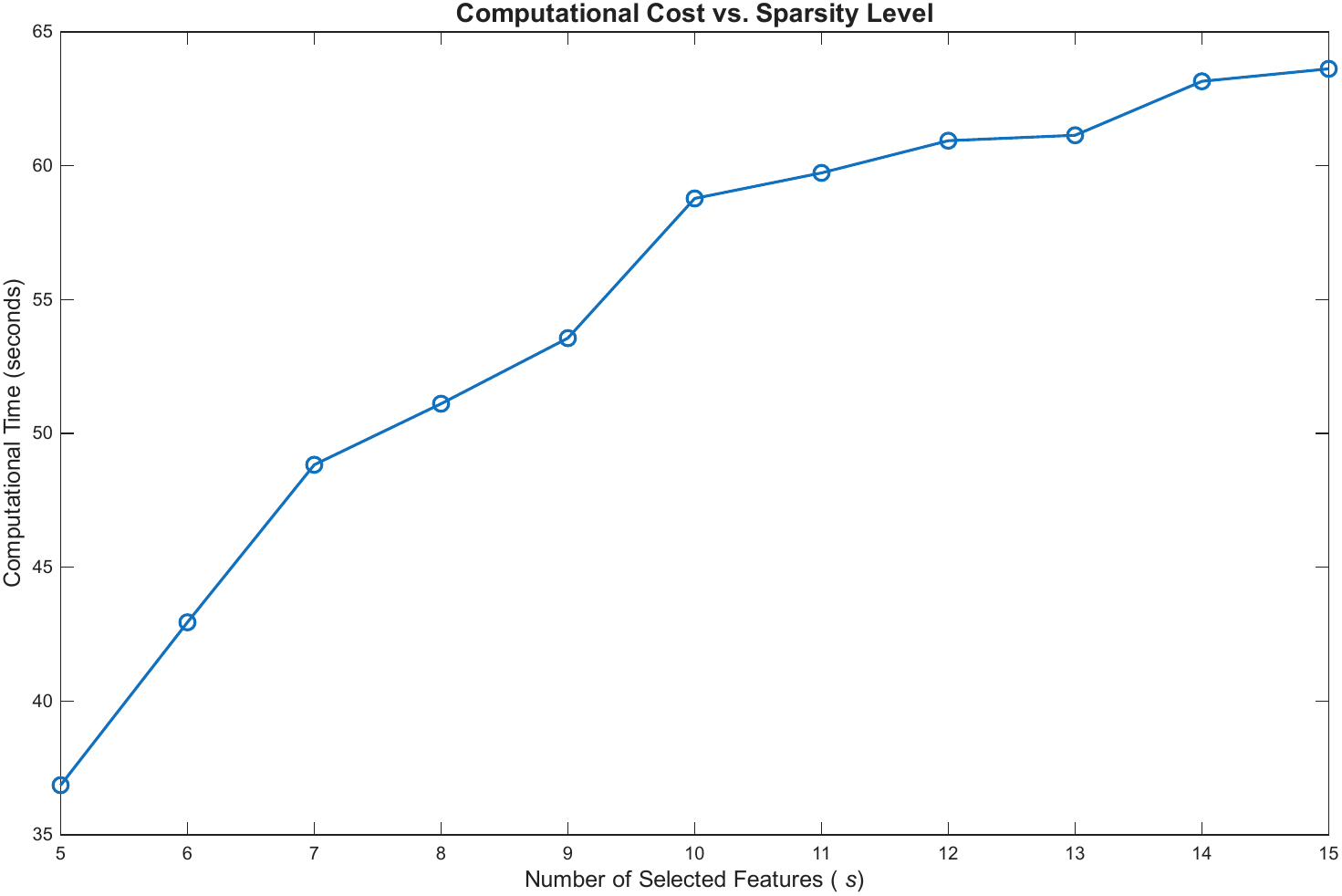}
    \caption{Computational time versus sparsity level. \label{fig:cost}}
\end{figure}

\subsection{$R^2$-Payoffs in Regression/Logistic Regression}\label{subsec:R2}
In regression settings, coalition values for feature subsets are commonly defined using the \emph{coefficient of determination} ($R^2$) obtained from retraining the model on each subset, reflecting the scaled improvement in model fit  \citep{Lipo2001,covert2021explaining}. Contrary to conventional expectations, our results unveil a novel insight: such a standard construction can fail to yield an inherently additive Shapley framework, especially when features are dependent or include irrelevant ones, which are almost certain to occur in practice.

To illustrate this phenomenon, let's consider the following simulation setup: \( y = X \alpha^* + \epsilon \), where \(\epsilon_i \sim \mathcal{N}(0, 1)\) and each row of \(X \in \mathbb{R}^{n \times p}\) is drawn from a multivariate normal distribution with mean zero and Toeplitz covariance \(\Sigma_{ij} = \theta^{|i-j|}\) with $\theta=0.5$. The true coefficient vector is set as \(\alpha^* = [3,\ 3,\ \ldots,\ 3]^\top\), and the sample size is \(n = 5p\). For each subset \(A\), we fit a regression model using the predictors indexed by \(A\) and define \(\nu_A\) as the resulting \(R^2\) value. Logistic regression is also considered in the classification setting,  $y_i \sim \mbox{Bernoulli}(\pi_i)$  and $\text{logit}(\pi) = X \alpha^*$, where we generate \(y\) according to a Bernoulli model and define \(\nu_A\) using the \textit{deviance}-based pseudo-\(R^2\).

\begin{figure}[!h]
    \centering
    \begin{subfigure}[b]{0.25\textwidth}
        \centering
        \includegraphics[width=\textwidth]{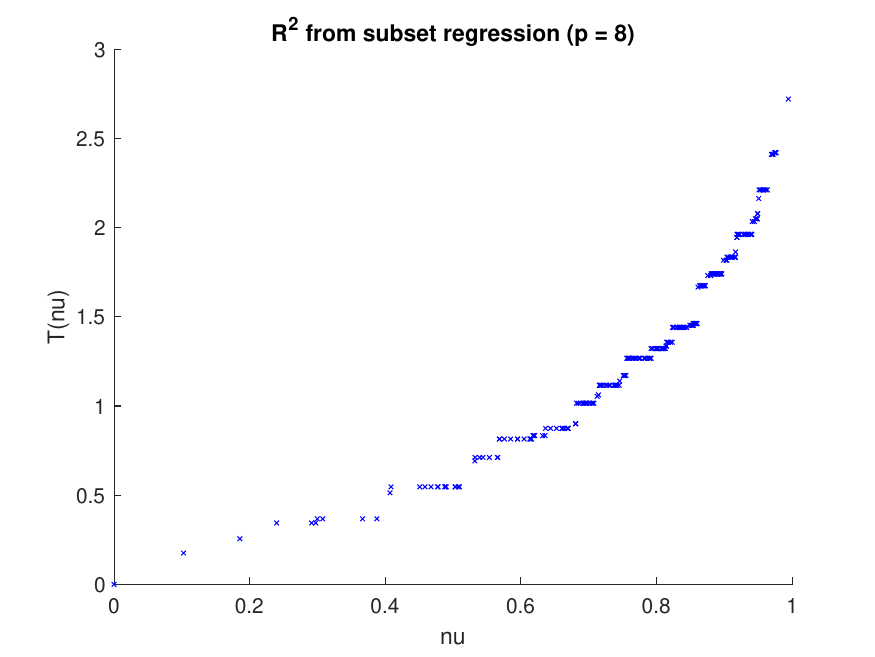}
    \end{subfigure}%
    \begin{subfigure}[b]{0.25\textwidth}
        \centering
        \includegraphics[width=\textwidth]{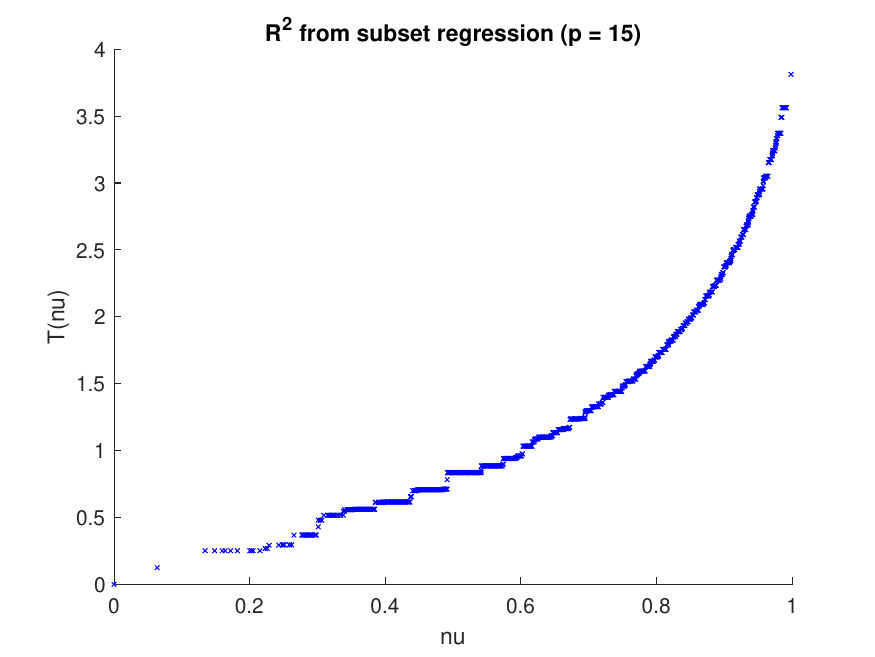}
    \end{subfigure}%
    \begin{subfigure}[b]{0.25\textwidth}
        \centering
        \includegraphics[width=\textwidth]{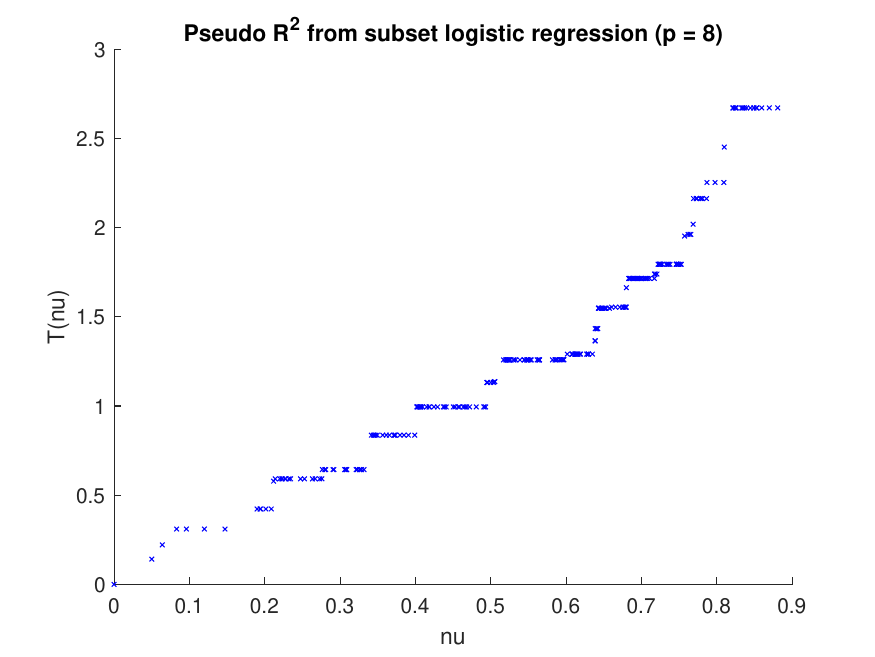}
    \end{subfigure}%
    \begin{subfigure}[b]{0.25\textwidth}
        \centering
        \includegraphics[width=\textwidth]{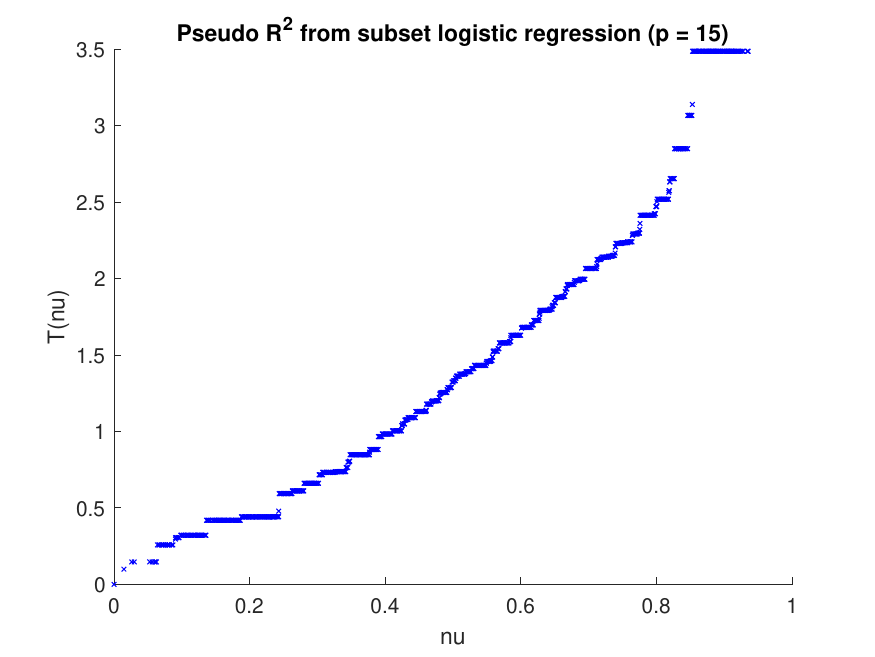}
    \end{subfigure}
    \caption{
     Estimated monotonic transformation \(\hat T(\nu)\) using regression-based \(R^2\)  and logistic regression-based pseudo-\(R^2\)  as the coalition worth function,  for \(p=8, 15\).
     \label{fig:regsimu}}
\end{figure}


As shown in Figure~\ref{fig:regsimu}, the estimated transformation \(\hat{T}(\nu)\) deviates significantly from linearity. For instance, in the regression case, even a simple logarithmic transformation fails to linearize the relationship, whereas a log-log transformation produces a nearly linear pattern---suggesting that the underlying transformation is super-exponential.\\

To examine whether model sparsity and feature correlation play a role in shaping the
transformation \(T\), we conducted a factorial simulation for linear
regression with \(p = 15\) and $s=p$.
We fixed the coefficient vector to  $
\alpha^{*}
   =  [3,0,3,0,\ldots,0 ]^{\mathsf T}$ with the sparsity level
$ s^{*}\in\{2,8,15\}$,
and varied the  correlation parameter
\(\theta \in \{0,\;0.5,\;0.9\}\).
Hence the design ranges from independent (\(\theta=0\))
to  collinear (\(\theta=0.9\)) predictors, and from very sparse
to fully dense signals.
Figure~\ref{fig:R2sparcorrsimu} displays the empirical
transformation~\(\hat T\) recovered in each setting.

\begin{figure}[!h]
    \centering
    \begin{subfigure}[b]{0.32\textwidth}
        \centering
        \includegraphics[width=\textwidth]{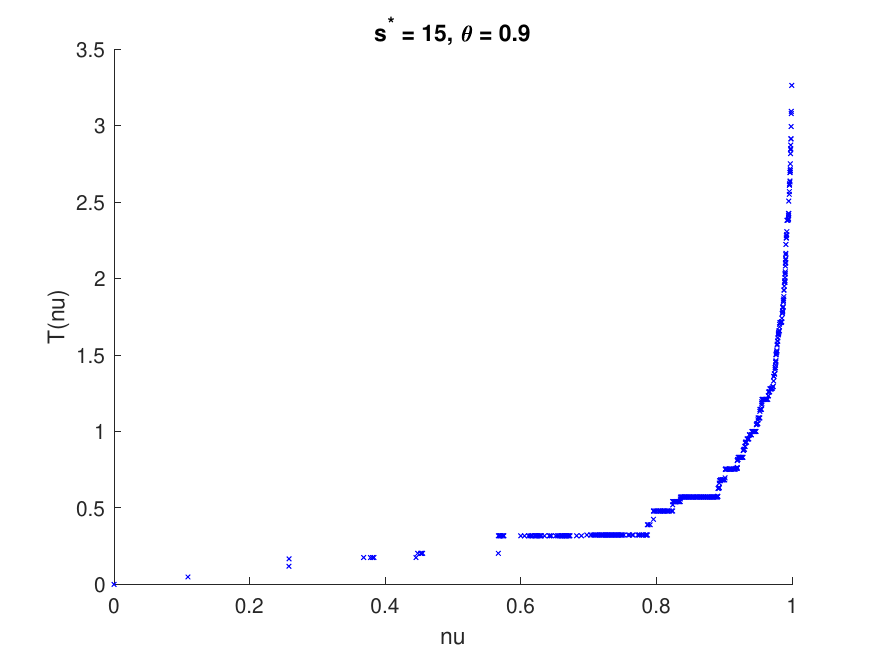}
    \end{subfigure}%
    \begin{subfigure}[b]{0.32\textwidth}
        \centering
        \includegraphics[width=\textwidth]{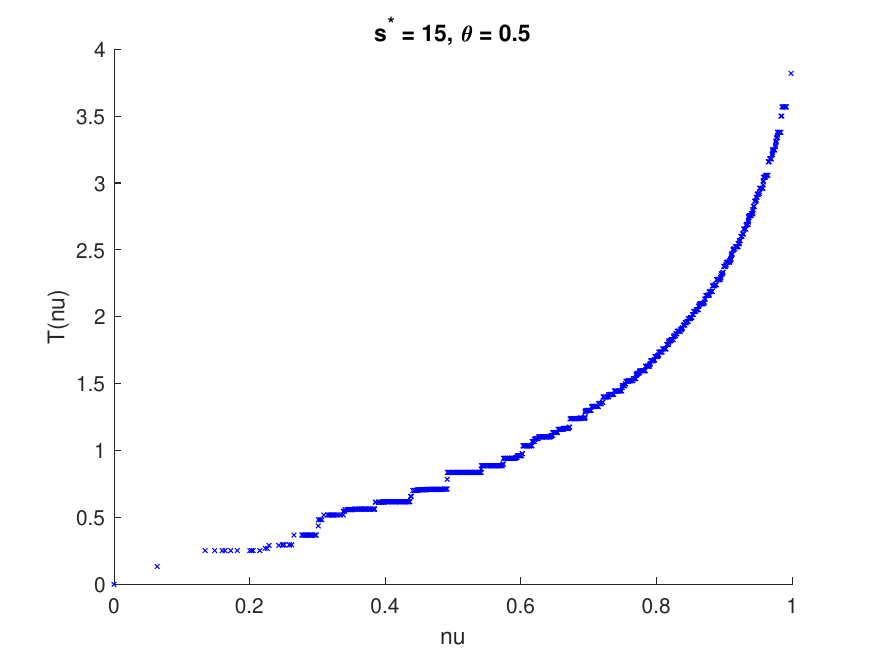}
    \end{subfigure}%
    \begin{subfigure}[b]{0.32\textwidth}
        \centering
        \includegraphics[width=\textwidth]{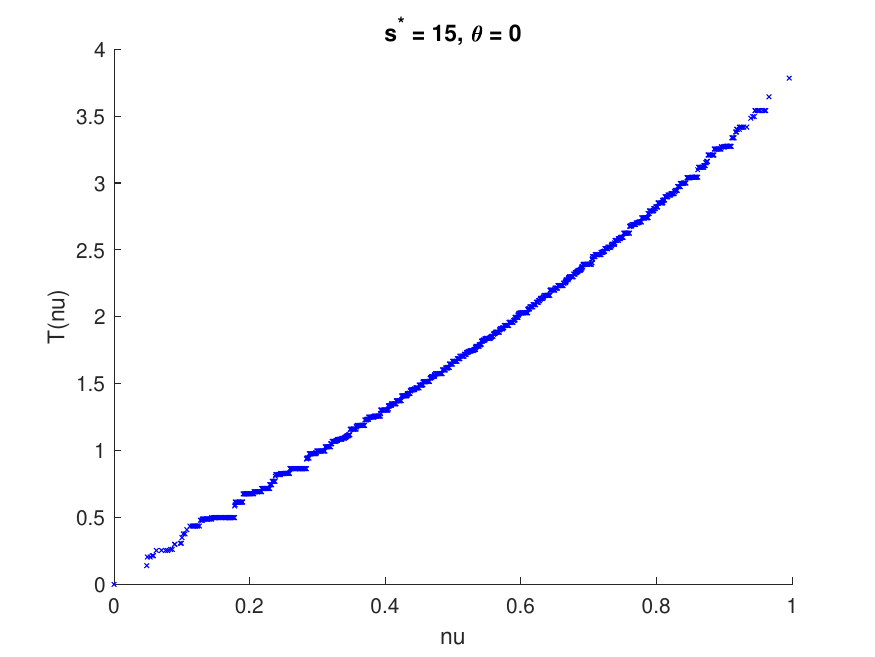}
    \end{subfigure}
\\
    \begin{subfigure}[b]{0.32\textwidth}
        \centering
        \includegraphics[width=\textwidth]{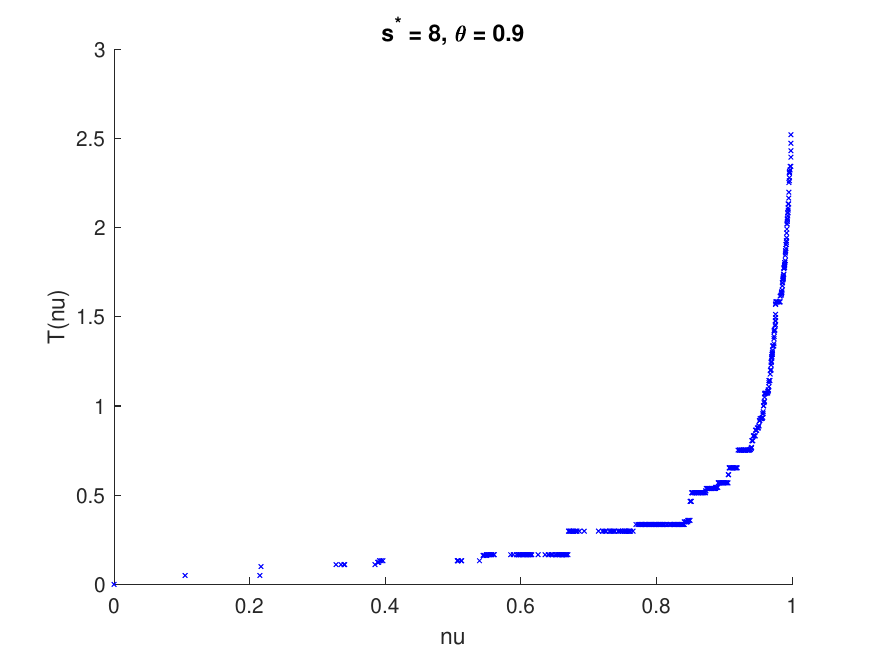}
    \end{subfigure}%
    \begin{subfigure}[b]{0.32\textwidth}
        \centering
        \includegraphics[width=\textwidth]{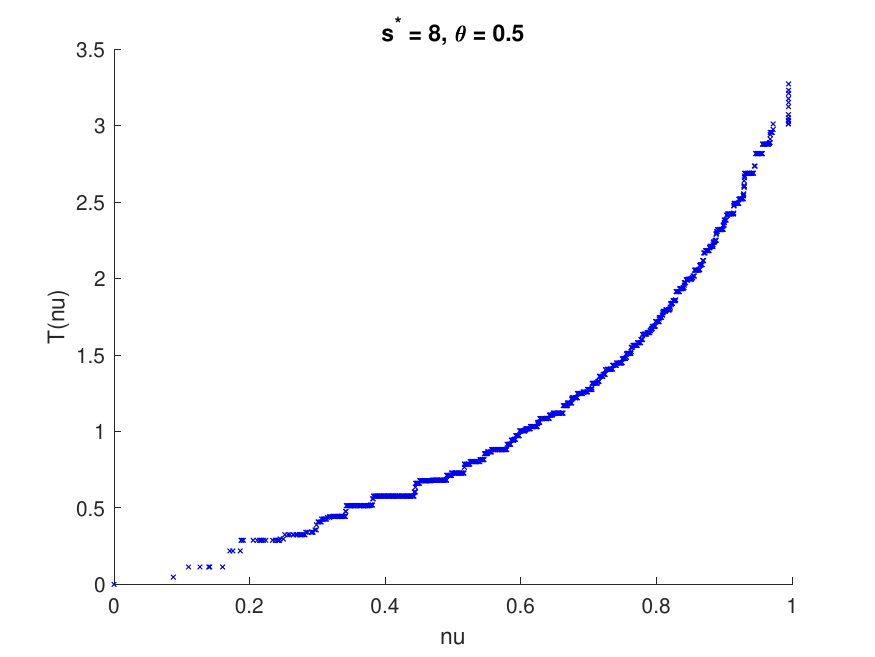}
    \end{subfigure}%
    \begin{subfigure}[b]{0.32\textwidth}
        \centering
        \includegraphics[width=\textwidth]{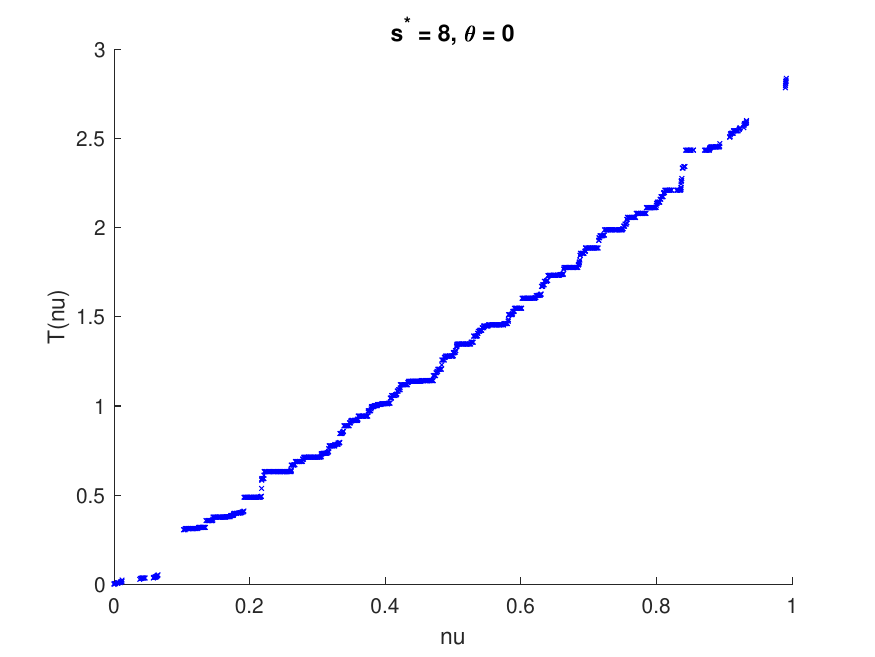}
    \end{subfigure}
\\
    \begin{subfigure}[b]{0.32\textwidth}
        \centering
        \includegraphics[width=\textwidth]{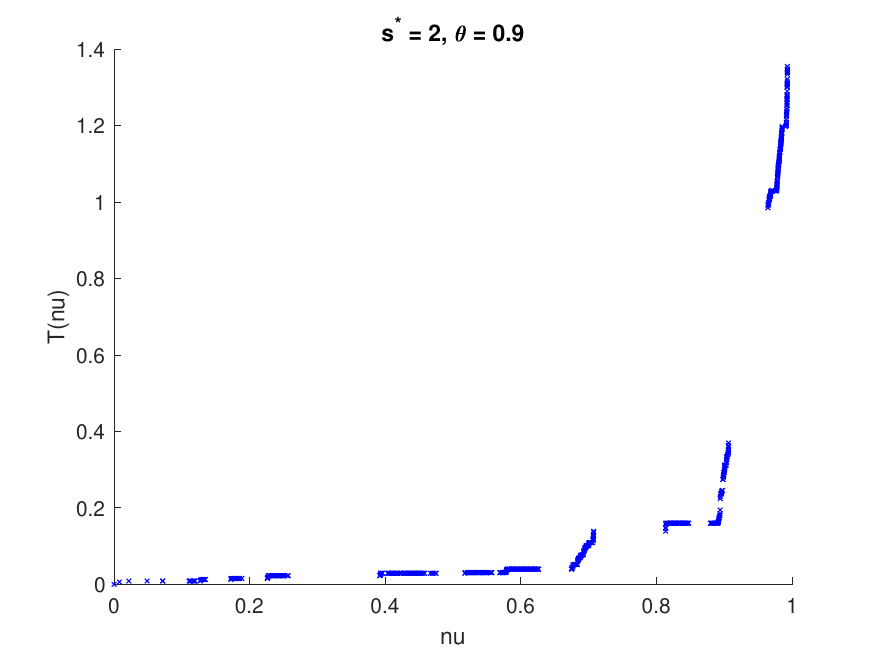}
    \end{subfigure}%
    \begin{subfigure}[b]{0.32\textwidth}
        \centering
        \includegraphics[width=\textwidth]{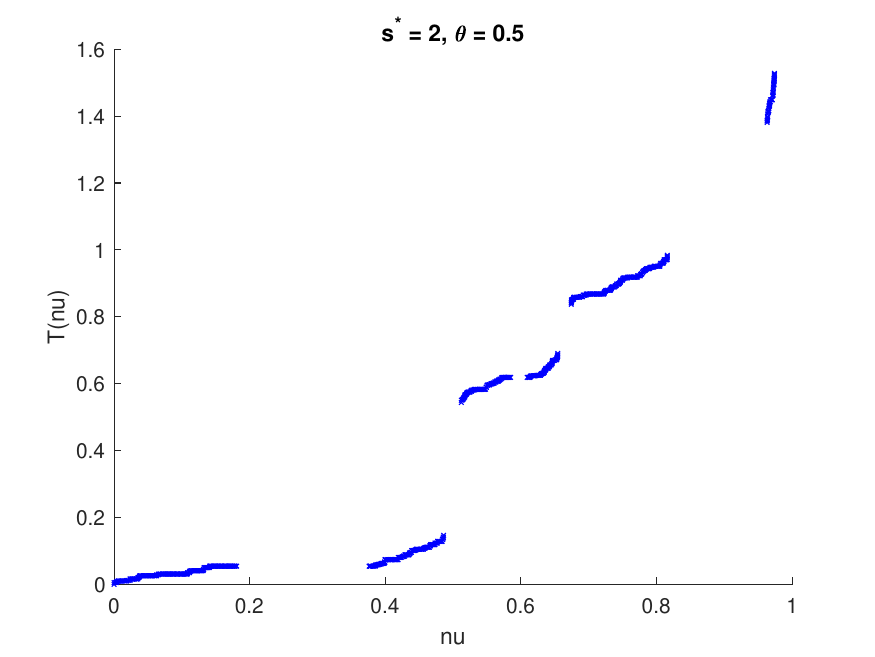}    \end{subfigure}%
    \begin{subfigure}[b]{0.32\textwidth}
        \centering
        \includegraphics[width=\textwidth]{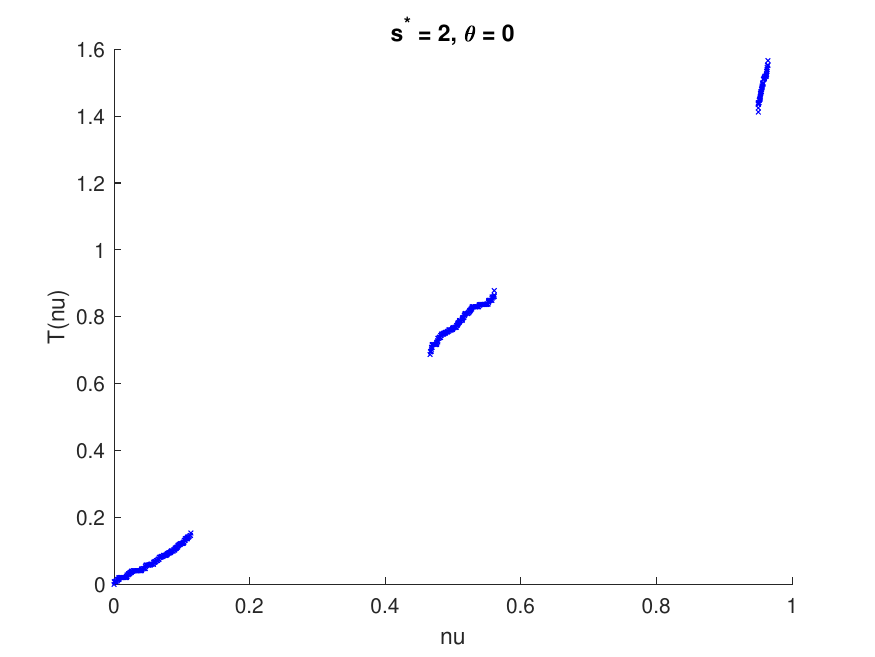}
    \end{subfigure}%
    \caption{Estimated monotonic transformation $\hat{T}(\nu)$ across varying sparsity levels ($s=15, 8, 2$, top to down) and feature correlation strengths ($\theta = 0.9, 0.5, 0$, left to right). The RIC criterion identifies $s=6$ as optimal.     \label{fig:R2sparcorrsimu}}
\end{figure}

Two main patterns emerge.
(i) \textbf{Correlation drives curvature:} as \(\theta\) increases,
\(\hat T\) deviates sharply from linearity, even in dense models.
(ii) \textbf{Sparsity introduces breaks:} at low \(s^{*}\) the curve becomes
piecewise, with segment slopes that differ substantially---another marker
of non-additivity.  Notably, pronounced nonlinearity appears even in the
independent, ultra-sparse case \((\theta = 0,\; s^{*} = 2)\), showing
that irrelevant features alone can distort raw worths.  Hence a \emph{monotone
nonlinear} transformation is indispensable when translating
\(R^{2}\)-based worth measures into the additive Shapley framework; without
it, either strong correlation or feature irrelevance breaks the required
additivity.
To the best of our knowledge, our results are the first to reveal that correlated features and the presence of irrelevant features can substantially undermine the additivity assumption in Shapley frameworks.

\subsection{Prostate Cancer}
We used the prostate data from \cite{Tibshirani1996} and took log(cancer volume) (\texttt{lcavol}) as the response, with clinical predictors including  log(prostate weight) (\texttt{lweight}), age (\texttt{age}), log(benign prostatic hyperplasia) (\texttt{lbph}), seminal vesicle invasion (\texttt{svi}, binary), log(capsular penetration) (\texttt{lcp}), Gleason score (\texttt{gleason}),   percentage Gleason 4/5 cells (\texttt{pgg45}), and log(prostate specific antigen) (\texttt{lpsa}). We computed SISR calibrated Shapley values  $\hat\gamma$ from $R^{2}$-worths at sparsity $s\in\{8,6,4\}$ and contrasted them with the conventional (raw) Shapley values in Fig.~\ref{fig:prostate}.

\begin{figure}[!h]
    \centering
    \begin{subfigure}[b]{.25\textwidth}
        \centering
        \includegraphics[width=\textwidth]{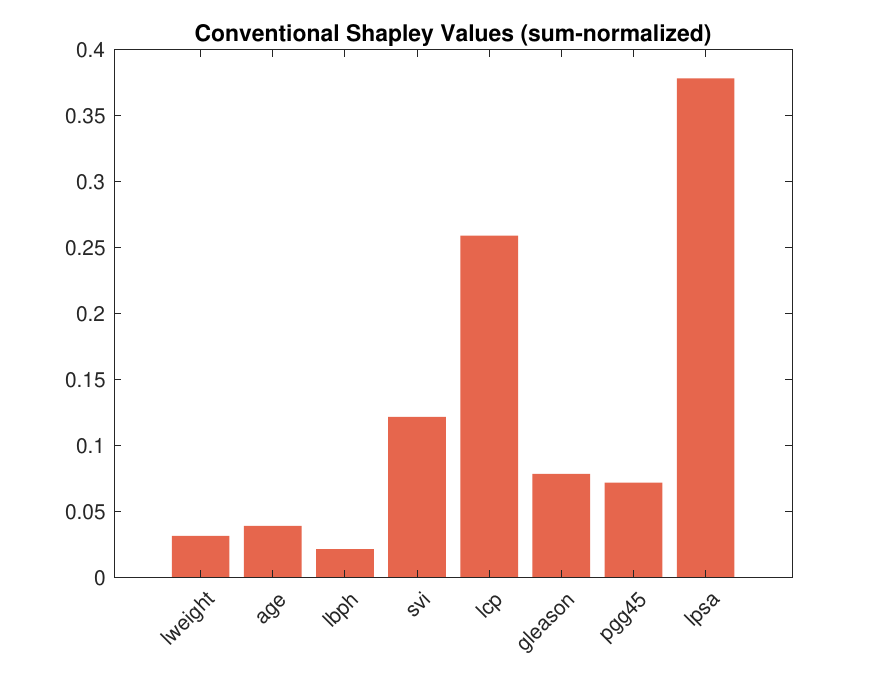}
    \end{subfigure}%
    \begin{subfigure}[b]{.25\textwidth}
        \centering
        \includegraphics[width=\textwidth]{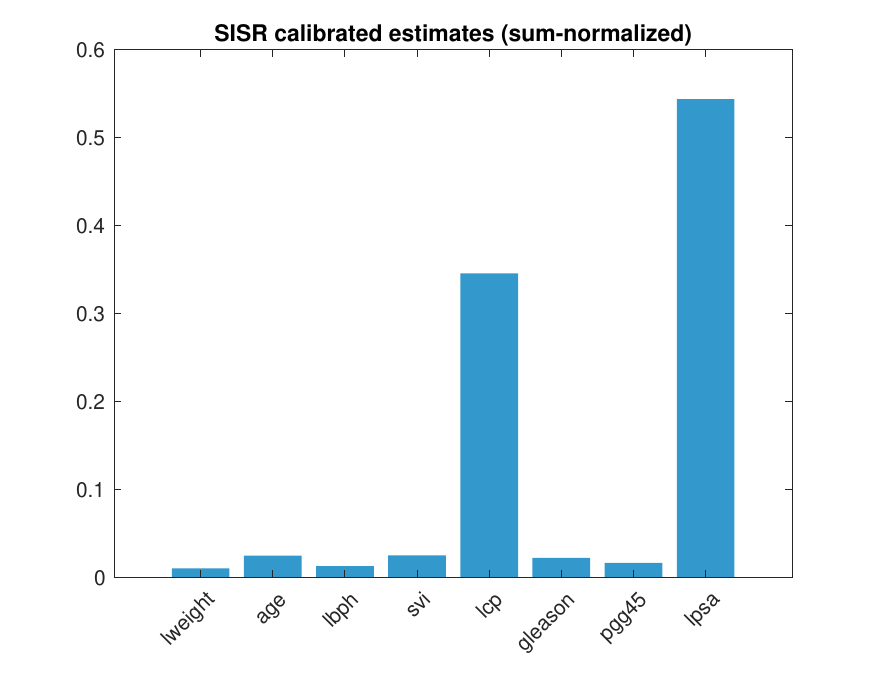}
    \end{subfigure}%
    \begin{subfigure}[b]{.25\textwidth}
        \centering
        \includegraphics[width=\textwidth]{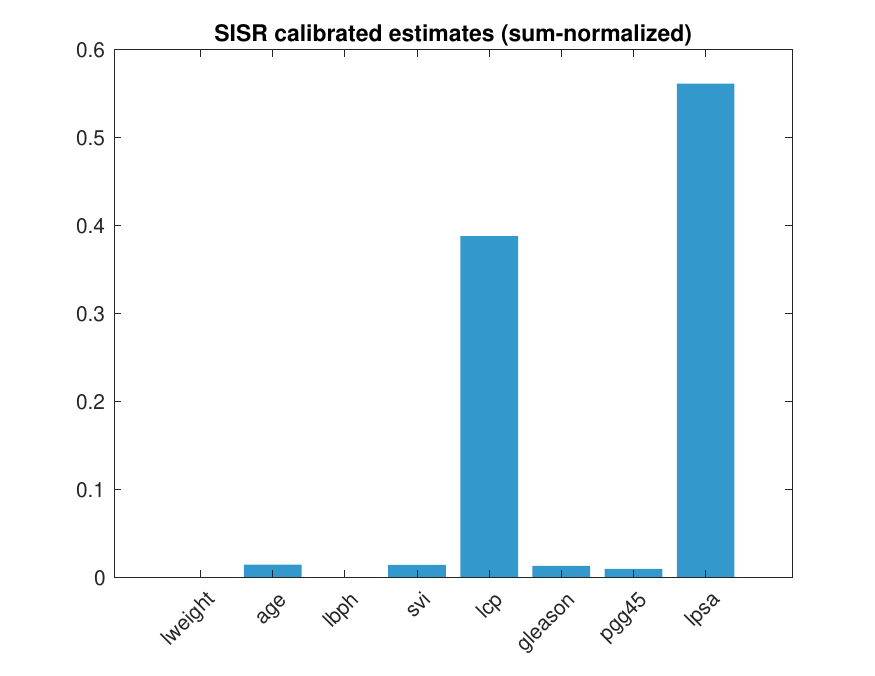}
    \end{subfigure}%
    \begin{subfigure}[b]{.25\textwidth}
        \centering
        \includegraphics[width=\textwidth]{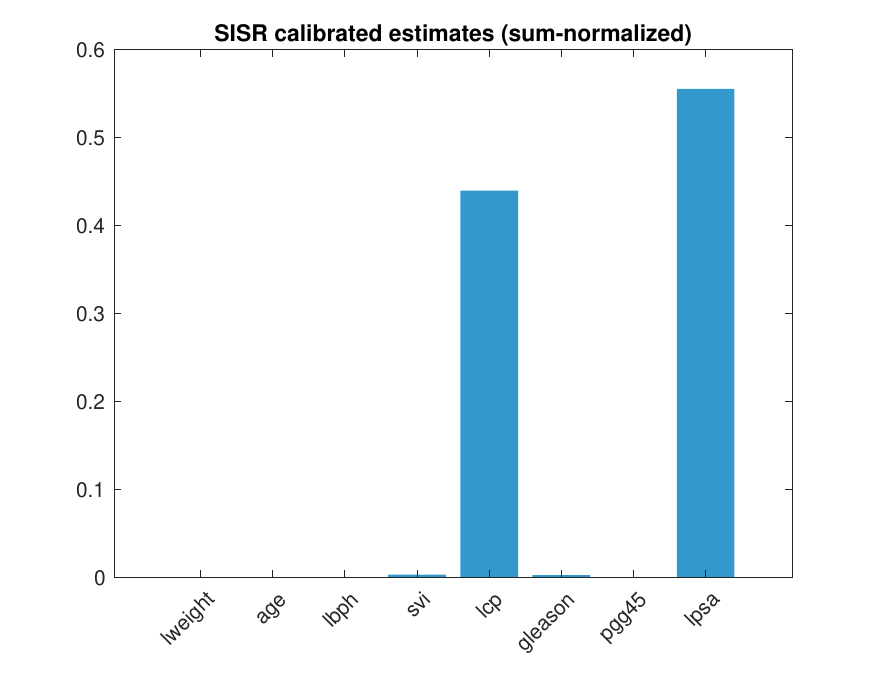}
    \end{subfigure}%
    \caption{Prostate data: conventional (raw) Shapley values (leftmost) and SISR-calibrated Shapley values at sparsity levels $s\in\{8,6,4\}$ (left to right). The RIC criterion identifies $s=6$ as optimal.    \label{fig:prostate}}
\end{figure}

Both schemes agree that \texttt{lcp} and \texttt{lpsa}   dominate the signal. The striking discrepancy concerns \texttt{svi}: the naive Shapley ranking elevates it to third place, claiming more than  10\%  of the total importance, whereas the calibrated $\hat\gamma$ assigns it virtually none.

This SISR finding turns out to be the one that aligns with established  evidence. Statistically, independent checks confirm that  \texttt{svi} contributes little: stepwise AIC and BIC both discard it, it is the final variable selected on the LASSO path \citep{Tibshirani1996}, and its $p$-value is as large as 0.6 in the full model. The conventional  Shapley result is also biologically implausible, since urological literature has consistently   shown that \texttt{svi} is not  a primary driver  \citep{Debras1998,Kristiansen2013}. Given this ground truth, the standard Shapley framework   fails to capture the true attributions.

This example cautions against applying \textit{off-the-shelf} Shapley formulas to raw coalition worths: without correction,  they can generate spurious importance values when the data includes correlated or irrelevant predictors. By jointly learning a monotone correction and imposing sparsity, our SISR procedure produces importance scores that align with corroborating diagnostics.

\subsection{Boston Housing}\label{subsec:boston}
The dataset \citep{harrison1978hedonic} captures socioeconomic and environmental factors for a suburban area of Boston, including  percentage of lower status of the population (\texttt{LSTAT}), weighted distances to five Boston employment centers (\texttt{DIS}), average number of rooms per dwelling (\texttt{RM}), bordering Charles River  (\texttt{CHAS}), and others,  alongside the response, the median value of owner-occupied homes. We trained an \textsc{XGBoost} regression model \citep{chen2016xgboost} using the hyper-parameter configuration given by \cite{maniar2023kaggle}.
After fitting the boosted tree ensemble, we evaluated the global performance of each feature subset using the interventional variant of SAGE    \citep{covert2020understanding}. For tree ensembles, SAGE adapts the fast path-traversal algorithm from TreeSHAP   \citep{Lundberg2020}  to estimate these global contributions efficiently.  Two payoff functions are considered: the negative mean squared error $\nu_A^{\text{\tiny MSE}} = -\operatorname{MSE}(A)$ and a robust form $\nu_A^{\text{rob}} =  \exp(-c\operatorname{MSE}(A))$ with $c =50/\max_{A} \operatorname{MSE}(A)$. The latter corresponds to a concave loss function, a standard approach in robust statistics  for modeling a  outlier-robust preference \citep{huber2009robust,hampel2011robust}, which down-weights the penalty from large, unpredictable errors.
   Figure \ref{fig:boston} displays the resulting Shapley attributions.

\begin{figure}[!h]
    \centering
    \begin{subfigure}[b]{.9\textwidth}
        \centering
        \includegraphics[width=\textwidth]{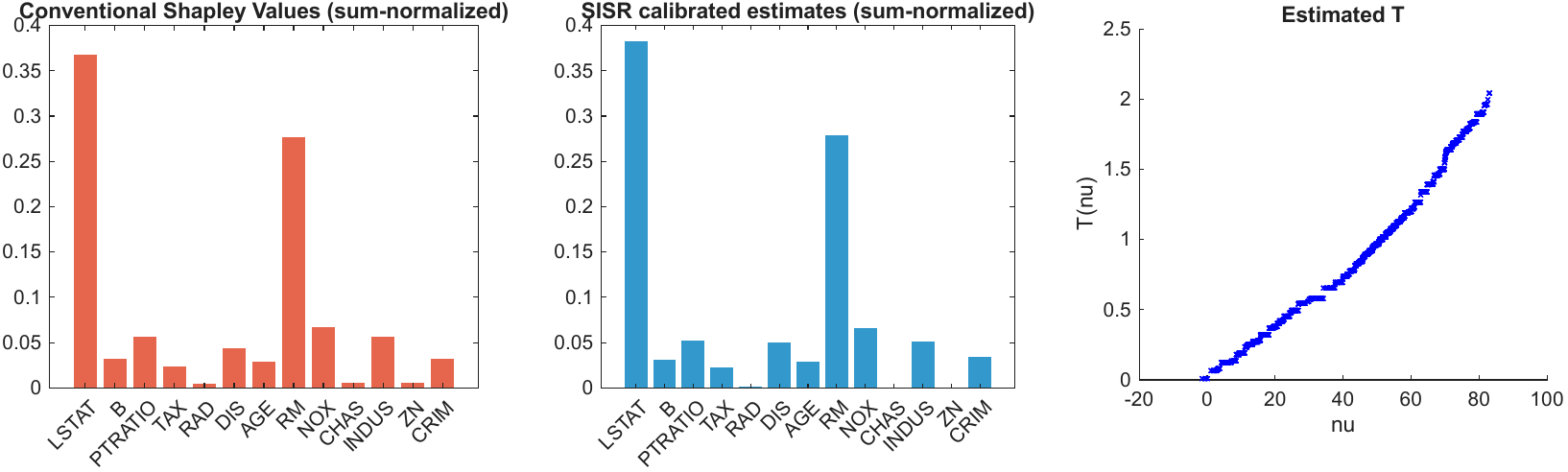}
        \caption{\footnotesize Negative-MSE payoff}
    \end{subfigure} \\ %
    \vspace{.15in}

    \begin{subfigure}[b]{.9\textwidth}
        \centering
        \includegraphics[width=\textwidth]{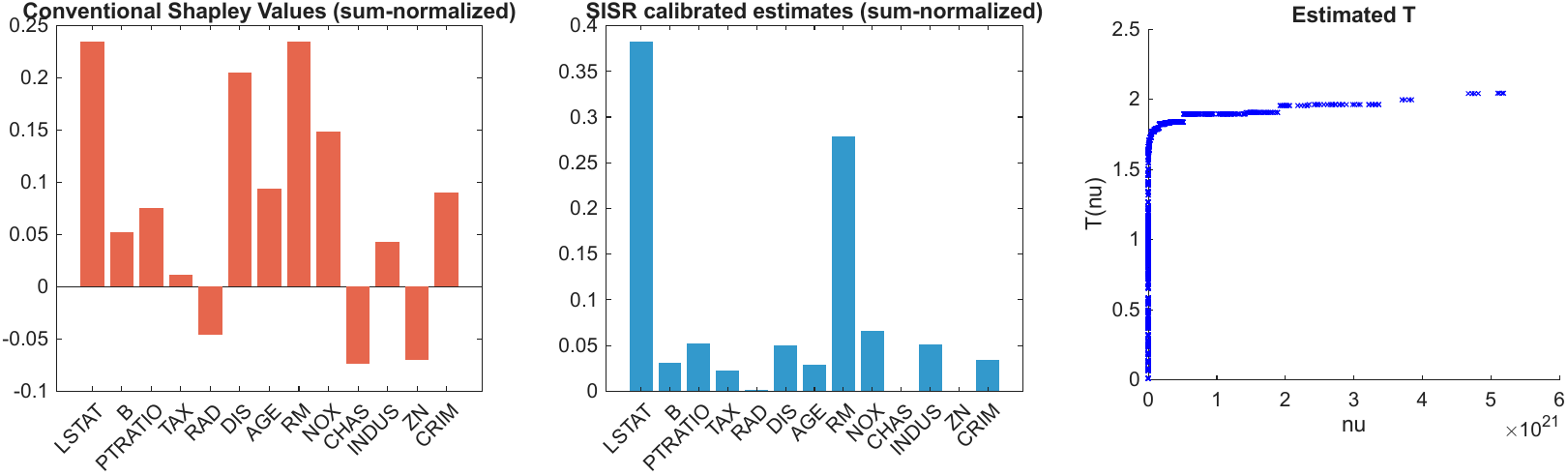}
        \caption{\footnotesize Robust payoff}
    \end{subfigure}%

    \caption{Boston housing: feature attributions
computed with conventional Shapley and SISR-calibrated  Shapley
 values  for  the negative-MSE payoff (top) and  the  robust payoff (bottom), along with the corresponding estimated monotone transformations.   \label{fig:boston}}
\end{figure}

The figure contrasts how the two attribution schemes respond to different payoff schemes.
Under the MSE payoff,   SISR has little to adjust---the scale is already compatible with linear additivity. In contrast, for the robust payoff, SISR produces a highly nonlinear transformation, 
 which  compensates for the distortions and preserves essentially the same attribution pattern observed under the MSE scale. The conventional Shapley values shift noticeably: the importance of \texttt{DIS} increases from minor to leading, and \texttt{CHAS} and several other variables even receive negative attributions. These sign and rank changes substantially alter the qualitative  interpretation of the game and reveal the standard procedure's sensitivity to the underlying payoff construction, whereas SISR remains robust.

\subsection{Bank Credit}
We analyze the South German Credit dataset, a benchmark for credit risk classification  \citep{covert2020understanding}. The dataset contains 1,000 observations, where the response is a binary indicator of credit risk, and $20$ predictor variables. These features include checking status, duration, credit history,
 age and so on. Following the experimental setup in \cite{covert2020understanding}, we trained a \textsc{CatBoost} classification model \citep{prokhorenkova2018catboost} on a training set.
With $p=20$ features, computing the full $2^{20}$ coalition values is computationally intractable. To approximate this full game, we selected a representative subset of 1,000 coalitions, using the efficient sampling strategy proposed by \cite{covert2021improving}. For each of these sampled coalitions, the payoff $\nu_A$ was then defined as the model's global performance  on a background test set, following the interventional SAGE methodology \citep{covert2020understanding}.
The two payoff functions we considered are the negative cross entropy, $\nu_A^{\text{ent}}$, and an exponential utility counterpart, $\nu_A^{\text{exp}} = -\exp(-c\nu_A^{\text{ent}})$. The first payoff corresponds to the standard    logistic log-likelihood, while the second payoff  models a strong risk-averse preference,   a foundational principle in modern economics and decision theory \citep{Pratt1964,Mas-Colell1995}.

\begin{figure}[!h]
    \centering
        \includegraphics[width=.31\textwidth,height=1.25in]{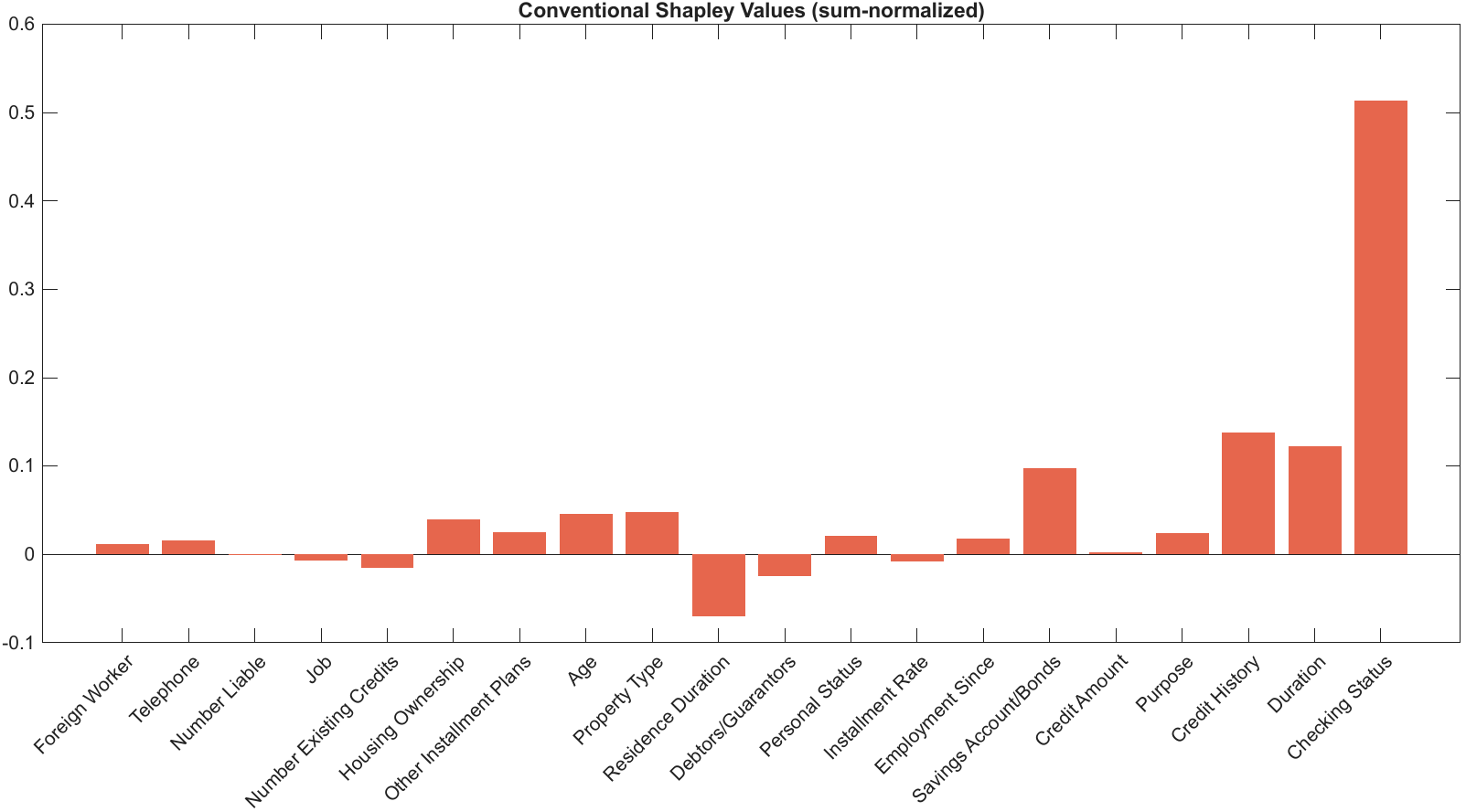}
        \includegraphics[width=.31\textwidth,height=1.25in]{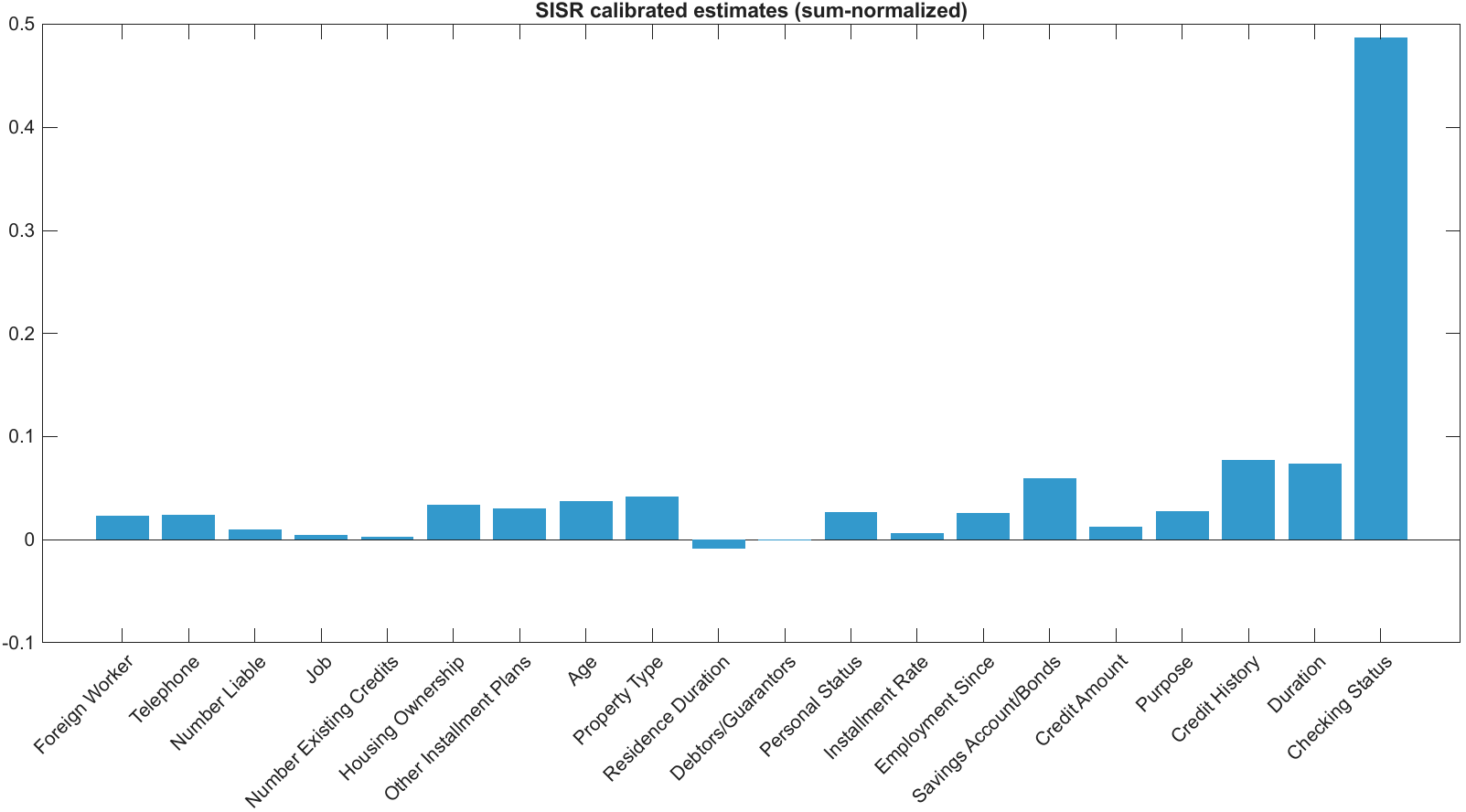}
        \includegraphics[width=.31\textwidth,height=1.25in]{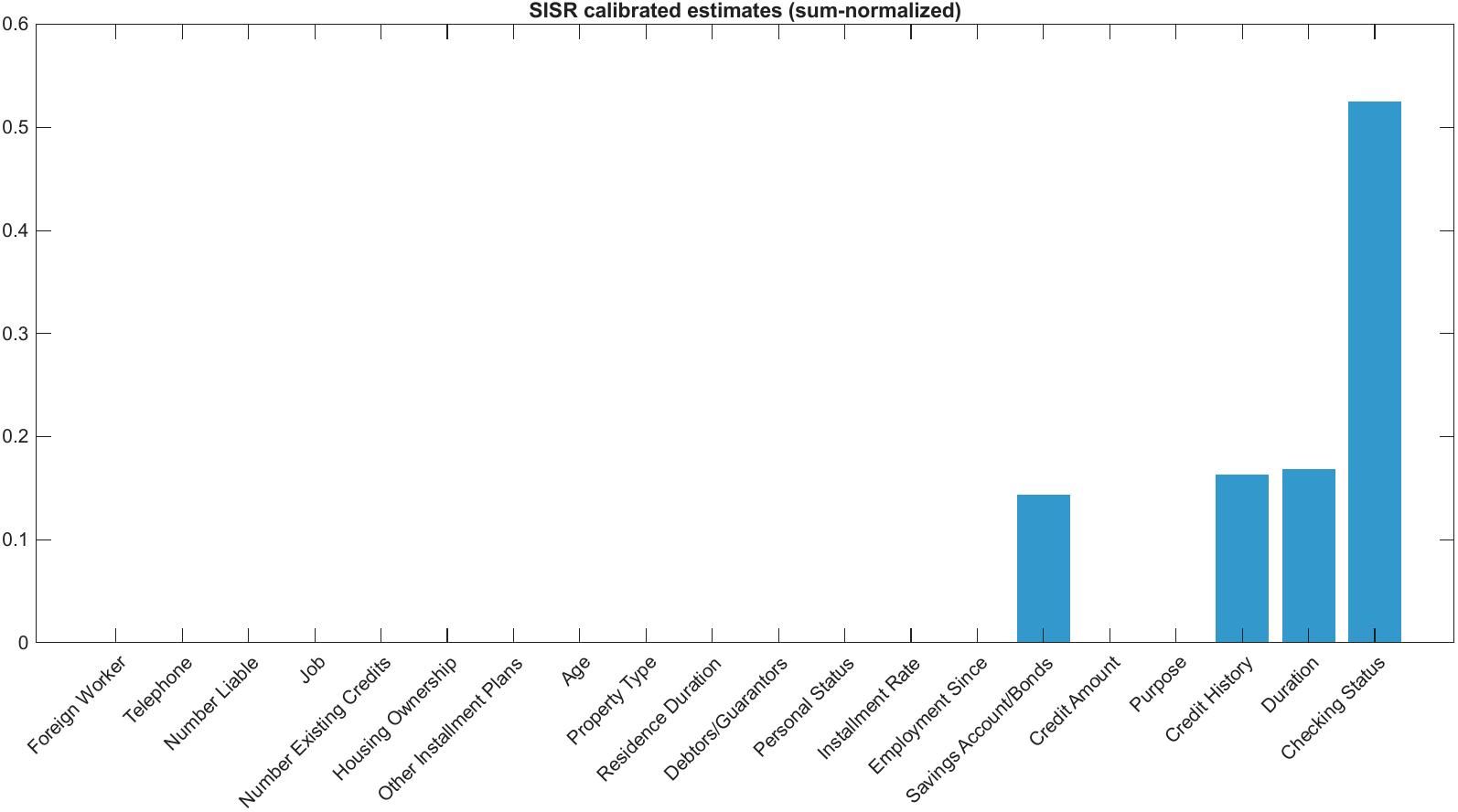}
\\
        \includegraphics[width=.31\textwidth,height=1.25in]{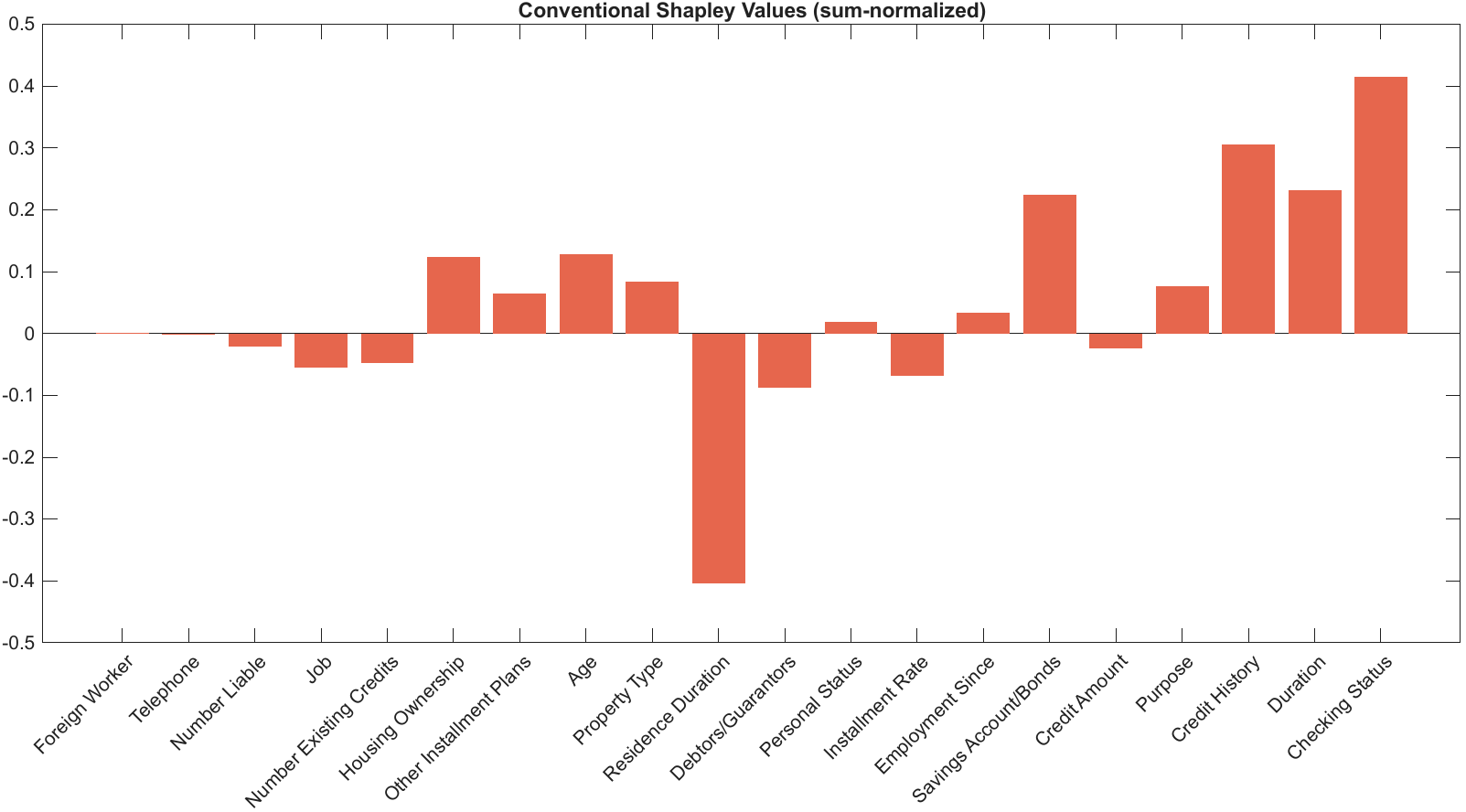}
        \includegraphics[width=.31\textwidth,height=1.25in]{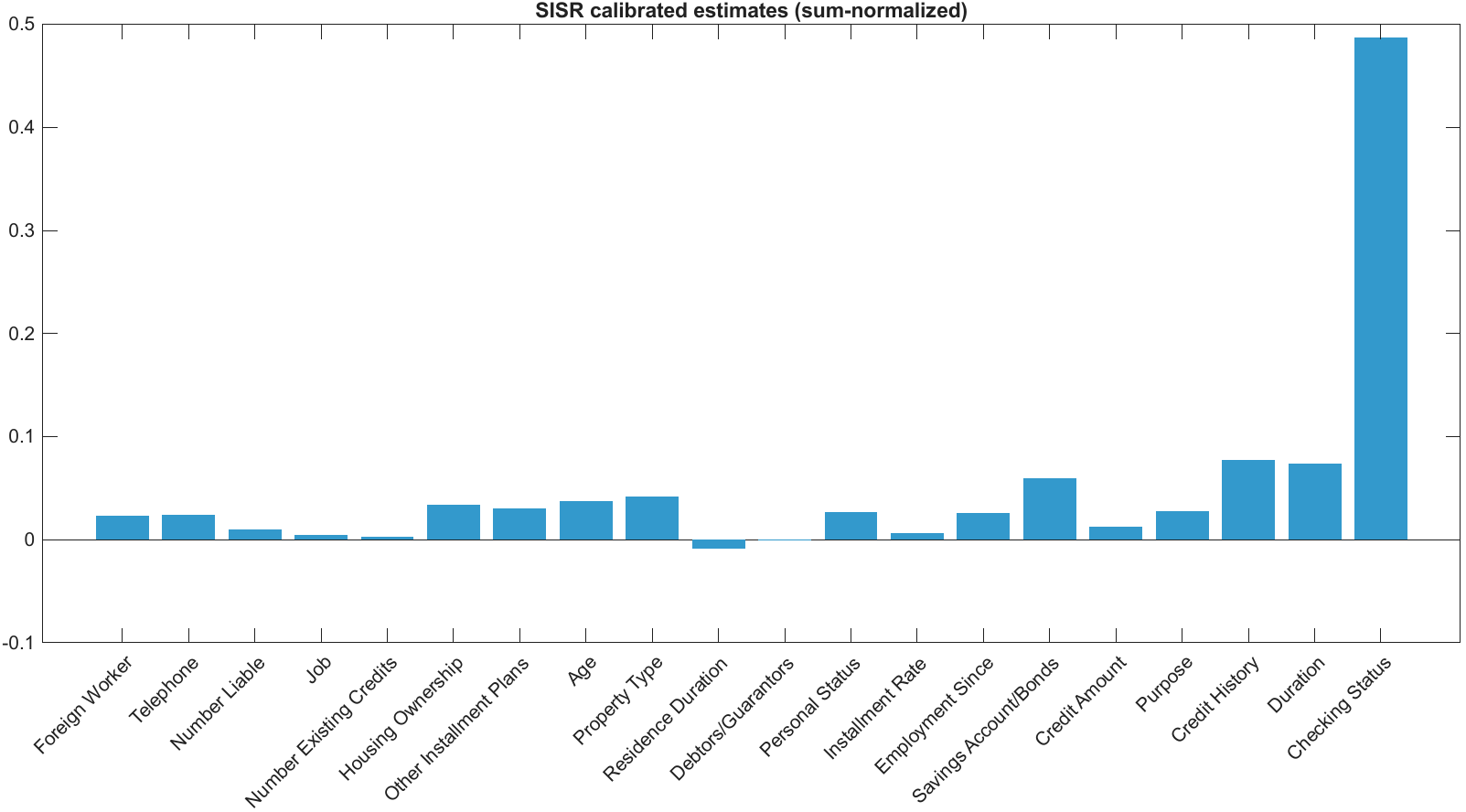}
        \includegraphics[width=.31\textwidth,height=1.25in]{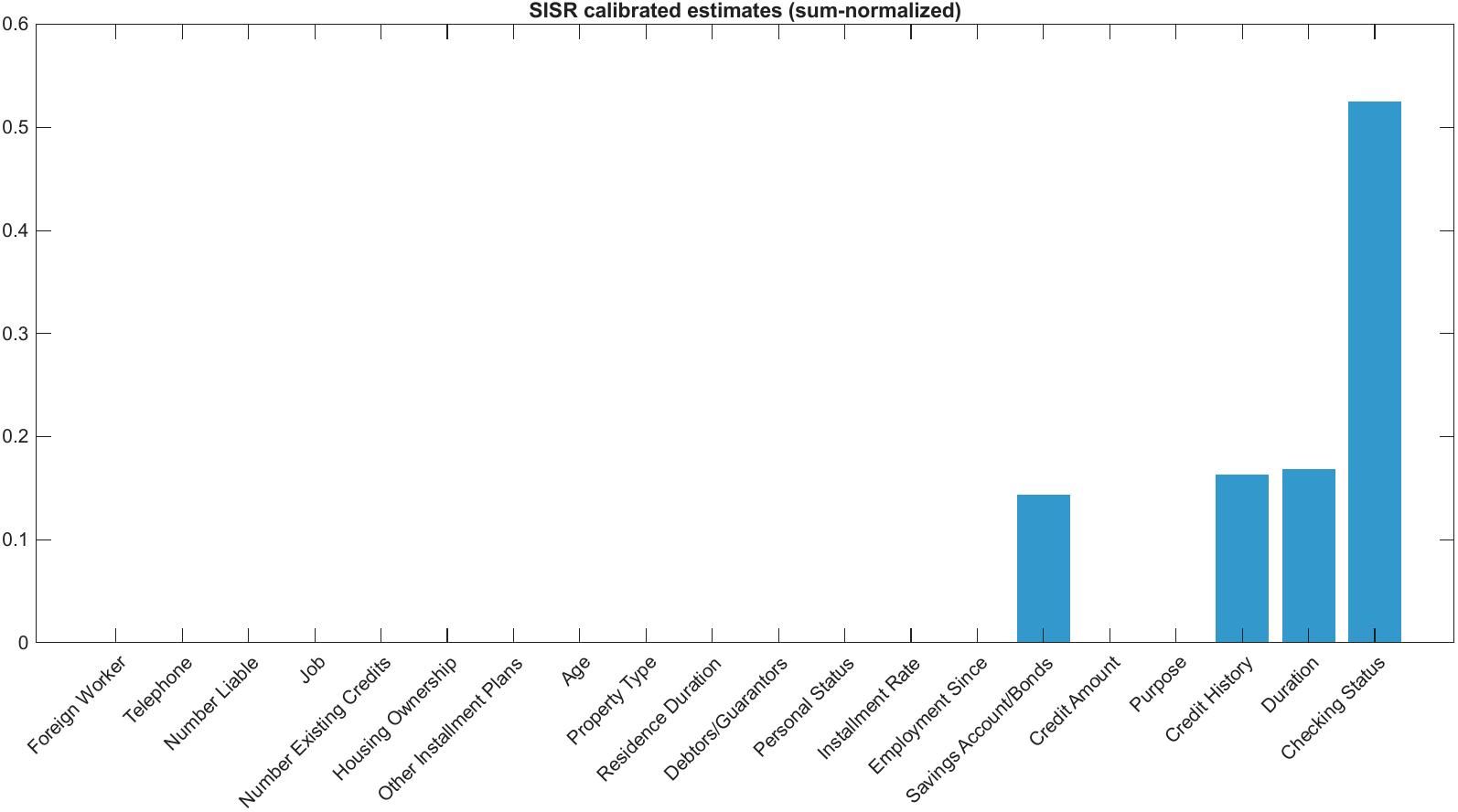}
        \caption{\footnotesize Bank credit data. The top row shows the feature attributions for the negative cross-entropy payoff, displaying (left to right): conventional Shapley values, SISR without sparsity, and SISR with $s=4$ (selected by RIC). The bottom row shows the same comparisons for the risk-averse exponential payoff. \label{fig:bankcredit}}
\end{figure}
Figure \ref{fig:bankcredit} displays the resulting feature attributions. The conventional Shapley results (top-left) differ  from those in the original SAGE analysis, likely due to sampling-based approximation errors. In particular,    \texttt{Residence Duration}   exhibits a spurious negative attribution. This instability becomes even more pronounced under the risk-averse exponential payoff (bottom-left), where the negative value grows to nearly four times its original magnitude.

In contrast, the SISR-calibrated attributions (with and without sparsity) remain remarkably stable.
\texttt{Residence Duration}    is assigned a near-zero contribution---consistent with the mild effect observed in \cite{covert2020understanding}. By removing nonlinear distortions in the payoff construction, SISR reveals the same sparse and interpretable structure across both settings, effectively filtering out the  distortions that undermine conventional Shapley estimates.

\subsection{Diabetes}

The Pima Indians Diabetes dataset \citep{smith1988pima} was collected by the U.S. National Institute of Diabetes and Digestive and Kidney Diseases. It records eight medical measurements for 768 women of Pima heritage near Phoenix, Arizona; the response indicates a physician's diagnosis of diabetes. Key predictors include plasma-glucose concentration two hours after an oral glucose-tolerance test (\texttt{Glucose}), body-mass index (\texttt{BMI}), and the diabetes-pedigree function (\texttt{DiabetesPedigreeFunction}).
We trained an \textsc{XGBoost} classifier \citep{chen2016xgboost} and tuned its hyper-parameters with \texttt{GridSearchCV} in \texttt{scikit-learn} \citep{pedregosa2011scikit}. Feature-subset worth was measured using two payoff functions: the negative cross-entropy (or logistic  log-likelihood) and the likelihood payoff (analogous to the utility variant in Section \ref{subsec:boston}). The results are shown in Figure \ref{fig:diab}.

\begin{figure}[!h]
    \centering
    \raisebox{5mm}{\includegraphics[width=.45\textwidth]{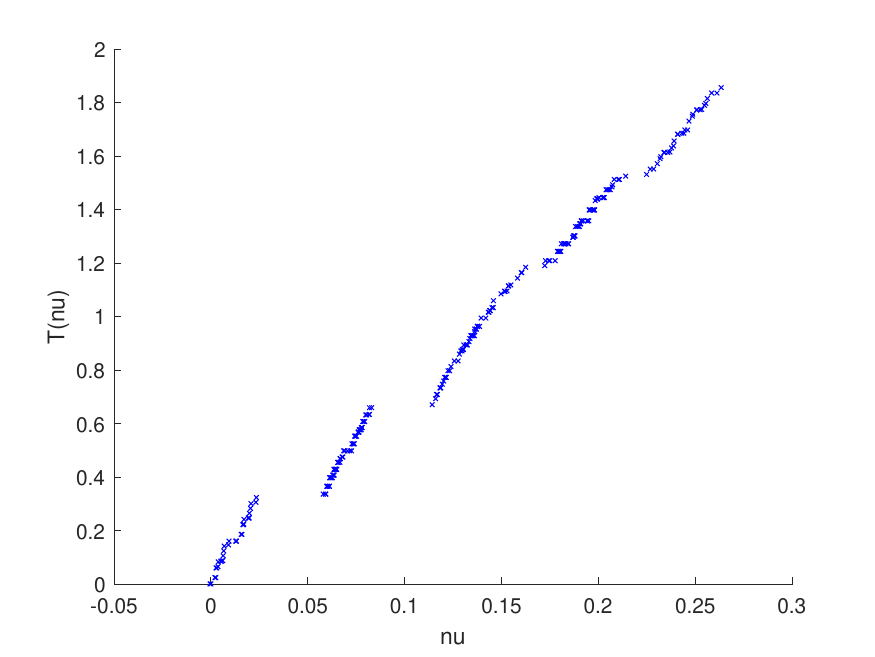}}
    \includegraphics[width=.45\textwidth]{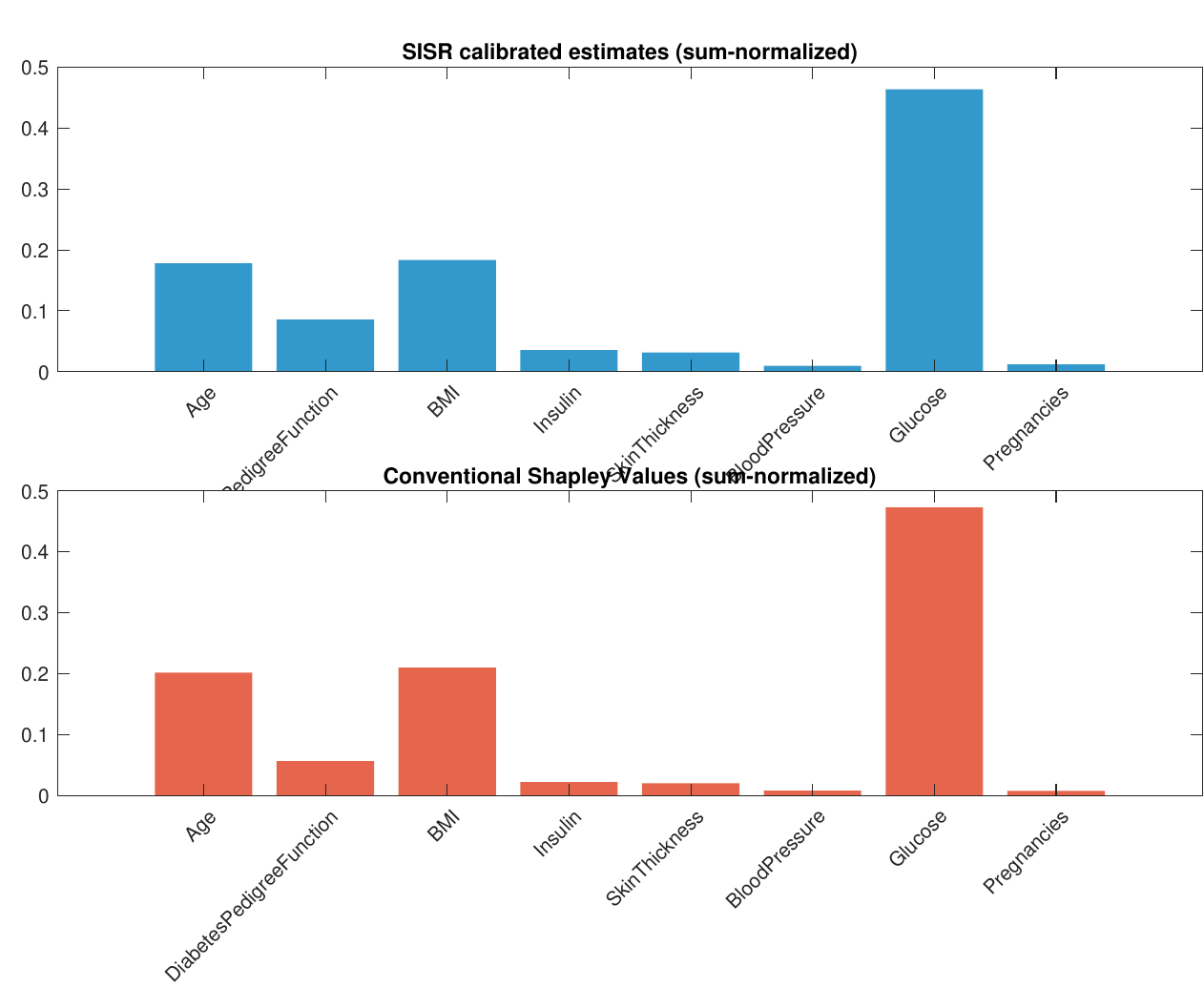}
\\
    \begin{subfigure}[b]{.25\textwidth}
        \centering
        \includegraphics[width=\textwidth]{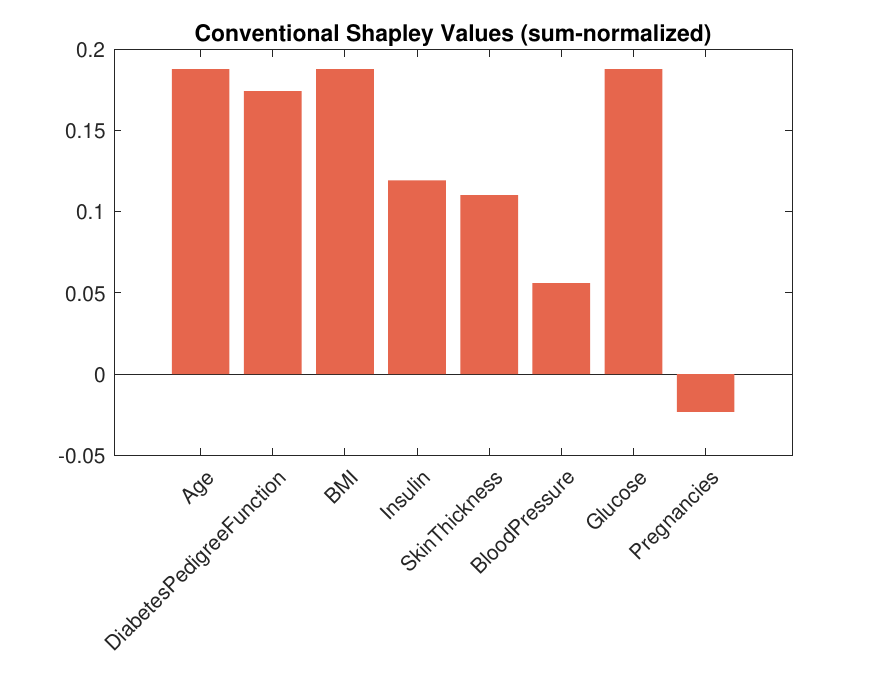}
    \end{subfigure}%
    \begin{subfigure}[b]{.25\textwidth}
        \centering
        \includegraphics[width=\textwidth]{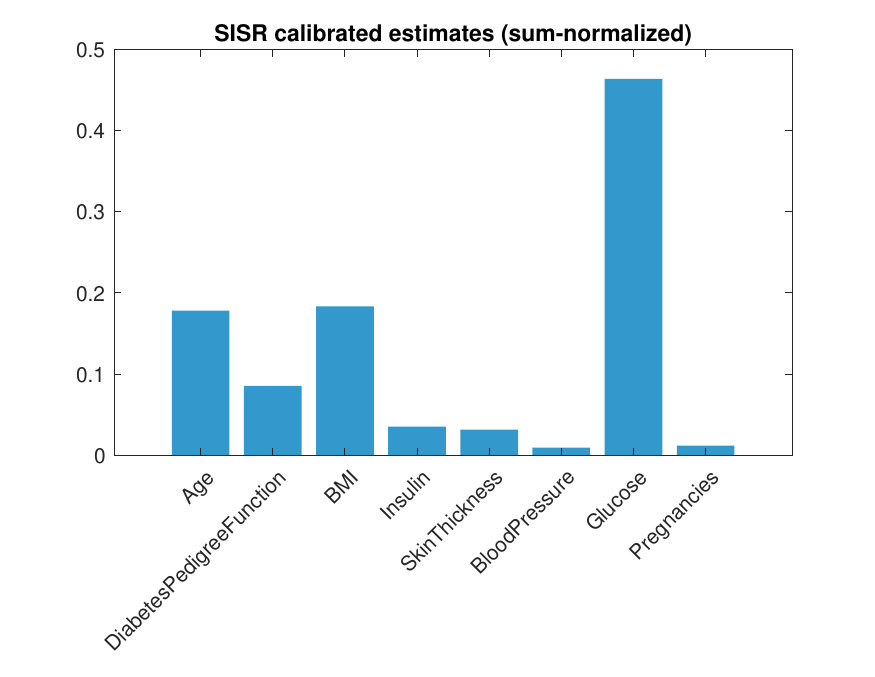}
    \end{subfigure}%
    \begin{subfigure}[b]{.25\textwidth}
        \centering
        \includegraphics[width=\textwidth]{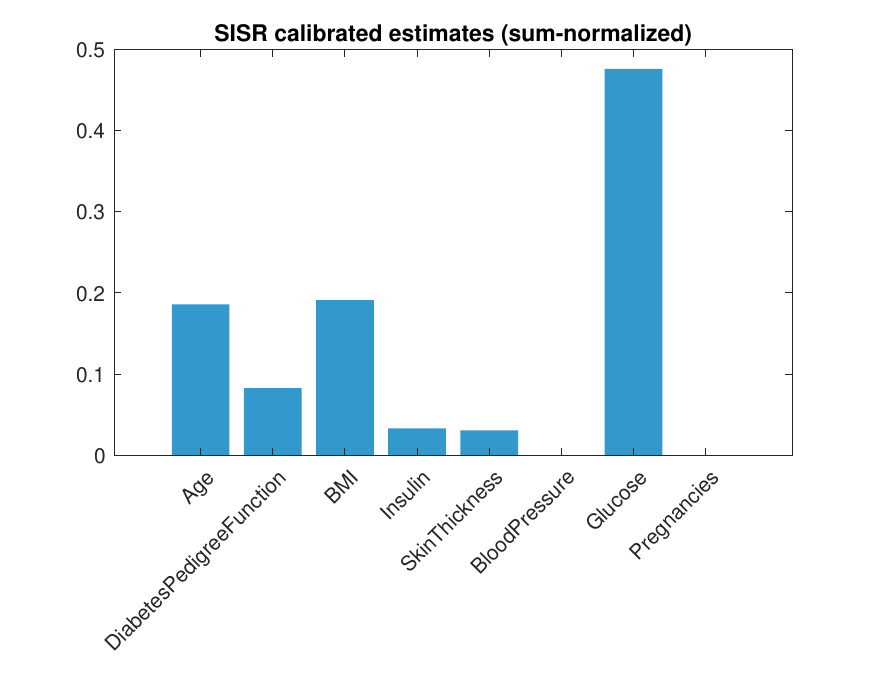}
    \end{subfigure}%
    \begin{subfigure}[b]{.25\textwidth}
        \centering
        \includegraphics[width=\textwidth]{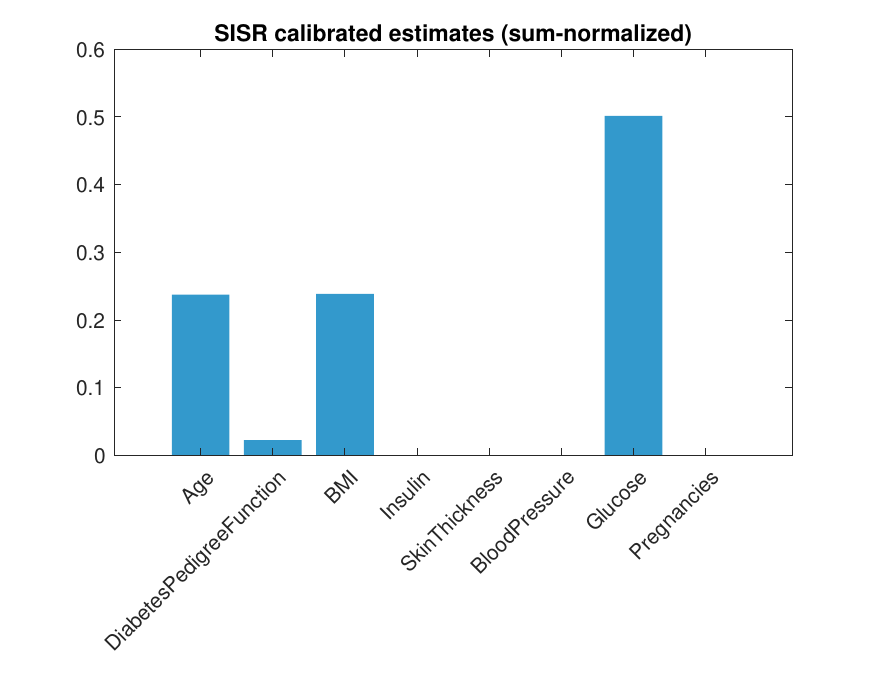}
    \end{subfigure}%
    \caption{Diabetes data. Top row (negative-entropy payoff):  estimated transformation $\hat{T}(\nu)$ obtained with SISR (left) and  feature attributions from SISR-calibrated versus conventional Shapley values (right).
Bottom row (likelihood or exponential payoff): feature attributions from the conventional method (leftmost) and from SISR for sparsity levels $s \in \{8,6,4\}$ (left to right). \label{fig:diab}}
\end{figure}

Using the negative-entropy payoff, the estimated transformation is piecewise linear---an indicator of sparsity in the underlying feature importances (cf. Section \ref{subsec:R2}). Although the segment slopes differ only modestly, the resulting shift alters several attributions: the largest discrepancy is for \allowbreak{\tt DiabetesPedigreeFunction}, whose importance is noticeably lower under the standard Shapley calculation than in the SISR-calibrated version.
 The bottom panel reports results for the likelihood payoff. Here, the conventional Shapley values are severely distorted: numerous features appear almost as influential as \texttt{Glucose}, and \texttt{Pregnancies} even becomes negative, illustrating how conventional attributions can be distorted by such a utility function.  By contrast, the SISR estimates without sparsity nearly replicate those obtained under negative entropy, and introducing sparsity leaves the ranking largely unchanged.
Overall, SISR delivers stable attributions across different value (payoff) functions, whereas the standard procedure is highly sensitive to which value function is chosen.
\section{Discussion and Extensions}\label{sec:ext}
Recent work has extended the classical Shapley value to account for higher-order interactions, including the Shapley-GAM framework and a wide family of Shapley Interaction (SI) indices, to provide a finer decomposition of the payoff function~$\nu$ \citep{GrabischRoubens1999,Marichal2000,Sundararajan2020,TsaiYehRavikumar2023,bordt2023}. Unlike the classical Shapley value, which is uniquely determined by its axioms, interaction indices are not: different extensions of the axiomatic framework or design criteria lead to distinct definitions and thus non-unique attributions, as exemplified by $k$-SII, STII,   FSII, among many others~\citep{muschalik2024}.
Computationally, all such methods scale unfavorably with the number of features~$p$, and even approximate variants remain expensive when $k$ grows beyond two \citep{Sundararajan2020}. From an interpretability perspective, explanatory power typically wanes with order: higher-order effects become increasingly hard to communicate and may overwhelm practitioners with ``information overload'' \citep{Baniecki2024}. From a statistical standpoint,   complex interaction terms may instead reflect noise or dependence-induced artifacts, yielding patterns that appear compelling but lack substantive validity
\citep{Hooker2007}.

In comparison, the SISR framework arises from a different---and often overlooked---misspecification. In many applications, the apparent non-additivity of the payoff does not reflect genuine high-order model logic but rather a nonlinear \emph{distortion} of the value function~$\nu$, introduced through preprocessing, surrogate construction, or sampling approximations. Existing interaction-based methods implicitly assume that~$\nu_A$ is a clean, faithful, and  \textbf{Gaussian} measure of model worth, so that any deviation from additivity must represent true interaction. Ignoring the noise, the $T^{-1}$-$\Sigma$-$T$ structure of SISR can also be rephrased (via Taylor expansion) in terms of  nonlinear  interactions---for example, if  $\beta_j\ge 0$ and $T(\cdot) = \log (1+ \cdot)$, the resulting value function $V_A(\{\beta_j\}_{j\in A})=\prod_{j\in A}(1+ \beta_j) -1= \sum_{j\in A} \beta_j + \sum_{j, j'\in A} \beta_j \beta_{j'} + \cdots$.  But  payoff constructions frequently violate Gaussianity: skewed, heavy-tailed, or bounded responses, as well as feature dependence or irrelevant covariates, can all distort $\nu$ and induce \textit{spurious} interaction where no genuine interaction would be found after a stabilization transformation.  SISR addresses this fundamental issue  by  {jointly} estimating a monotone transformation~$T$ and attribution effects, recovering an additive, Gaussian working model that restores interpretability.

We can unify stabilization and interaction to form a powerful framework for nonlinear XAI.    For example, we can generalize~\eqref{TShap} by  including interaction terms on the transformed scale:
\[
T(\nu_A) \sim \mathcal{N}\big( \sum_{j \in A} T(\beta_j^*) + \sum_{\{i,k\} \subseteq A} T(\beta_{ik}^*) + \cdots,   \, \sigma_A^2 \big),
\]
subject to appropriate structural constraints such as monotonicity, sparsity, and normalization.  Such a framework   would simultaneously correct for non-Gaussian payoff distortions via~$T$ and capture genuine synergistic effects via the interaction coefficients. This represents a promising yet computationally demanding direction for future research.
\\

As noted by a reviewer, the methodology (e.g., \eqref{TShap}) extends from the squared-loss regression setting to a generalized linear model framework \citep{mccullagh1989generalized}. In particular, the loss in \eqref{functional-shapley} or \eqref{spaiso-shapley} is now rephrased as
$$
\sum_{A\in 2^F}w_{\text{\tiny SH}}(A) L ( \delta_A; T(\nu_A )), \quad  \delta_A=\sum_{j\in A} T(\beta_j), \quad  L(\eta, y) =   -\eta y + b(\eta),
$$
where $b$ is the log-partition function in the specified  natural exponential distribution and  $(b')^{-1}$ induces the   canonical link   \citep{hardin2018glmext}. The resulting optimization subject to the same monotonicity constraints   and   sparsity/normalization constraints   as in \eqref{spaiso-shapley} may be termed as the sparse isotonic Shapley GLM. When $b(\eta) = \eta^2/2$,  this reduces to the squared-loss formulation analyzed in previous sections.

An appealing fact is that  the overall computational framework in Section~\ref{sec:comp} carries over  seamlessly.  The update of   $t$ (and hence $\delta$) is obtained by an order-restricted fit using the isotonic machinery adapted to the GLM loss; see, e.g.,  \cite{JSSv032i05} and \cite{LussRosset2014} for related PAVA and active-set extensions.    The update of $\gamma$ uses exactly the same surrogate function as in \eqref{surrofunc} with   the working   loss  $l(\gamma) = \sum_{A\in 2^F}w_{\text{\tiny SH}}(A) L ( \delta_A; T(\nu_A ))$, for which a direct calculation yields    $\nabla l(\gamma)= Z^\top W(b'(  Z\gamma)   - t) $,      recovering  \eqref{gradform} in the Gaussian case. The nonlinear operators from Theorem~\ref{th:normHT} are unchanged and Theorem~\ref{th:gammaoptalg} continues to apply with a distribution-specific curvature bound. For example, in logistic regression where
$b(\eta) = \log(1+\exp(\eta))$, the lower bound on  $\rho$ becomes  $\|Z^\top W  Z\|_2/4 $ (since $b''\le 1/4$). Derivations follow Section~\ref{sec:comp} closely and are omitted.

On the other hand, the Shapley interpretability is most natural in the Gaussian working model; moving to non-Gaussian families weakens the link to Shapley's foundational axioms. Also, in our experience, practical payoff functions rarely exhibit count-type behavior (as in Poisson models) or binary-valued outcomes (as in Bernoulli models), and we have not identified a compelling application in these settings. Therefore, although the GLM extension is methodologically sound and straightforward to implement, we leave its detailed investigation to future work.

\section{Conclusion}\label{sec:summ}
This paper   introduced \textit{Sparse Isotonic Shapley Regression} (SISR) to address two pressing limitations of Shapley values in the  context of XAI: the blind application of  an additive main-effect valuation to real-world payoff constructions driven by non-Gaussian distributions or domain-specific loss scales,  and the lack of native sparsity control in high-dimensional attribution tasks.

 Rather than abandoning the interpretability provided by a simple, additive structure of individual feature contributions,  SISR is designed to restore it.  Our framework achieves  nonlinear explainability by jointly learning a data-driven monotonic transformation via weighted isotonic regression   and enforcing an \(\ell_{0}\) sparsity constraint through normalized hard-thresholding.  Remarkably, the monotonic ordering constraint allows SISR to bypass a closed-form specification of the transformation, and sparsity both improves interpretability and accelerates computation in large feature spaces.  Each alternating step admits a closed-form update, enjoys global convergence guarantees, and is straightforward to implement.

Our $T$-additive Shapley model is built upon this novel capability of ``learning to be additive,''   which is able to simultaneously  recover the true payoff transformation and sparse support.  We reveal for the first time that the mere existence of irrelevant features or inter-feature dependence can induce a  payoff transformation that departs substantially from linearity, highlighting the necessity of nonlinear explainability.  Extensive experiments  validate that     SISR stabilizes attributions across radically different payoff constructions, correctly filters out spurious features, and aligns with established diagnostics, whereas standard Shapley values suffer severe rank and sign distortions.

By unifying domain adaptation and sparsity pursuit within the Shapley framework, SISR advances the frontier of {nonlinear explainability}, providing a theoretically grounded, robust, and scalable attribution methodology.

\bibliographystyle{apalike}
\bibliography{shapley}

\begin{thebibliography}{}

\bibitem[Algaba et~al., 2019]{algaba2019handbook}
Algaba, E., Fragnelli, V., and S{\'a}nchez-Soriano, J. (2019).
\newblock {\em Handbook of the Shapley Value}.
\newblock Chapman \& Hall/CRC Series in Operations Research. CRC Press.

\bibitem[Ancona et~al., 2019]{ancona2019explaining}
Ancona, M., Ceolini, E., {\"O}ztireli, C., and Gross, M. (2019).
\newblock Explaining deep neural networks with a polynomial time algorithm for
  shapley value approximation.
\newblock {\em Proceedings of the 36th International Conference on Machine
  Learning}.

\bibitem[Au et~al., 2022]{Au2022}
Au, Q., Herbinger, J., Stachl, C., Bischl, B., and Casalicchio, G. (2022).
\newblock Grouped feature importance and combined features effect plot.
\newblock {\em Data Min. Knowl. Discov.}, 36(4):1401–1450.

\bibitem[Baniecki et~al., 2024]{Baniecki2024}
Baniecki, H., Parzych, D., and Biecek, P. (2024).
\newblock The grammar of interactive explanatory model analysis.
\newblock {\em Data Mining and Knowledge Discovery}, 38:2596--2632.

\bibitem[Bordt and von Luxburg, 2023]{bordt2023}
Bordt, S. and von Luxburg, U. (2023).
\newblock From shapley values to generalized additive models and back.
\newblock In {\em Proceedings of the 26th International Conference on
  Artificial Intelligence and Statistics (AISTATS)}, volume 206 of {\em
  Proceedings of Machine Learning Research}, pages 709--745.

\bibitem[Charnes et~al., 1988]{Charnes1988}
Charnes, A., Golany, B., Keane, M., and Rousseau, J. (1988).
\newblock {\em Extremal Principle Solutions of Games in Characteristic Function
  Form: Core, Chebychev and Shapley Value Generalizations}, pages 123--133.
\newblock Springer Netherlands, Dordrecht.

\bibitem[Chen and Guestrin, 2016]{chen2016xgboost}
Chen, T. and Guestrin, C. (2016).
\newblock Xgboost: A scalable tree boosting system.
\newblock In {\em Proceedings of the 22nd ACM SIGKDD International Conference
  on Knowledge Discovery and Data Mining}, pages 785--794.

\bibitem[Cohen et~al., 2007]{Cohen2007}
Cohen, S., Dror, G., and Ruppin, E. (2007).
\newblock Feature selection via coalitional game theory.
\newblock {\em Neural Computation}, 19(7):1939--1961.

\bibitem[Covert and Lee, 2021]{covert2021improving}
Covert, I. and Lee, S.-I. (2021).
\newblock Improving kernelshap: Practical shapley value estimation via linear
  regression.
\newblock In {\em Proceedings of the 24th International Conference on
  Artificial Intelligence and Statistics (AISTATS)}, volume 130 of {\em
  Proceedings of Machine Learning Research}, pages 3457--3465.

\bibitem[Covert et~al., 2020]{covert2020understanding}
Covert, I., Lundberg, S.~M., and Lee, S.-I. (2020).
\newblock Understanding global feature contributions with additive importance
  measures.
\newblock {\em Advances in neural information processing systems},
  33:17212--17223.

\bibitem[Covert et~al., 2021]{covert2021explaining}
Covert, I.~C., Lundberg, S., and Lee, S.-I. (2021).
\newblock Explaining by removing: a unified framework for model explanation.
\newblock {\em The Journal of Machine Learning Research}, 22(209):9477--9566.

\bibitem[de~Leeuw et~al., 2009]{JSSv032i05}
de~Leeuw, J., Hornik, K., and Mair, P. (2009).
\newblock Isotone optimization in {R}: Pool-adjacent-violators algorithm
  ({PAVA}) and active set methods.
\newblock {\em Journal of Statistical Software}, 32(5):1–24.

\bibitem[Debras et~al., 1998]{Debras1998}
Debras, B., Guillonneau, B., Bougaran, J., Chambon, E., and Vallancien, G.
  (1998).
\newblock Prognostic significance of seminal vesicle invasion on the radical
  prostatectomy specimen. rationale for seminal vesicle biopsies.
\newblock {\em European Urology}, 33(3):271--277.

\bibitem[Duan and Okten, 2025]{Duan2025}
Duan, H. and Okten, G. (2025).
\newblock Derivative-based shapley value for global sensitivity analysis and
  machine learning explainability.
\newblock {\em International Journal for Uncertainty Quantification},
  15(1):1--16.

\bibitem[Foster and George, 1994]{Foster94}
Foster, D.~P. and George, E.~I. (1994).
\newblock The risk inflation criterion for multiple regression.
\newblock {\em The Annals of Statistics}, 22(4):1947--1975.

\bibitem[Fryer et~al., 2021]{fryer2021shapley}
Fryer, D., Str{\"u}mke, I., and Nguyen, H. (2021).
\newblock Shapley values for feature selection: The good, the bad, and the
  axioms.
\newblock {\em Ieee Access}, 9:144352--144360.

\bibitem[Grabisch and Roubens, 1999]{GrabischRoubens1999}
Grabisch, M. and Roubens, M. (1999).
\newblock An axiomatic approach to the concept of interaction among players in
  cooperative games.
\newblock {\em International Journal of Game Theory}, 28(4):547--565.

\bibitem[Hampel et~al., 2011]{hampel2011robust}
Hampel, F.~R., Ronchetti, E.~M., Rousseeuw, P.~J., and Stahel, W.~A. (2011).
\newblock {\em Robust statistics: the approach based on influence functions},
  volume 114.
\newblock John Wiley \& Sons.

\bibitem[Hardin and Hilbe, 2018]{hardin2018glmext}
Hardin, J.~W. and Hilbe, J.~M. (2018).
\newblock {\em Generalized Linear Models and Extensions}.
\newblock Stata Press, College Station, TX, 4 edition.

\bibitem[Harrison and Rubinfeld, 1978]{harrison1978hedonic}
Harrison, D. and Rubinfeld, D.~L. (1978).
\newblock Hedonic housing prices and the demand for clean air.
\newblock {\em Journal of Environmental Economics and Management},
  5(1):81--102.

\bibitem[Hooker, 2007]{Hooker2007}
Hooker, G. (2007).
\newblock Generalized functional anova diagnostics for high dimensional
  functions of dependent variables.
\newblock {\em Journal of Computational and Graphical Statistics},
  16(3):709--732.

\bibitem[Huber and Ronchetti, 2009]{huber2009robust}
Huber, P.~J. and Ronchetti, E.~M. (2009).
\newblock {\em Robust statistics}.
\newblock John Wiley \& Sons, Hoboken, NJ, {S}econd edition.

\bibitem[Jothi et~al., 2021]{JOTHI2021103}
Jothi, N., Husain, W., and Rashid, N.~A. (2021).
\newblock Predicting generalized anxiety disorder among women using shapley
  value.
\newblock {\em Journal of Infection and Public Health}, 14(1):103--108.

\bibitem[Kristiansen et~al., 2013]{Kristiansen2013}
Kristiansen, A., Wiklund, F., Wiklund, P., and Egevad, L. (2013).
\newblock Prognostic significance of patterns of seminal vesicle invasion in
  prostate cancer.
\newblock {\em Histopathology}, 62(7):1049--1056.

\bibitem[Lipovetsky and Conklin, 2001]{Lipo2001}
Lipovetsky, S. and Conklin, M. (2001).
\newblock Analysis of regression in game theory approach.
\newblock {\em Applied Stochastic Models in Business and Industry},
  17(4):319--330.

\bibitem[Lundberg et~al., 2020]{Lundberg2020}
Lundberg, S.~M., Erion, G.~G., Chen, H., DeGrave, A.~J., Prutkin, J.~M., Nair,
  B., Katz, R., Himmelfarb, J., Bansal, N., and Lee, S.-I. (2020).
\newblock From local explanations to global understanding with explainable ai
  for trees.
\newblock {\em Nature Machine Intelligence}, 2(1):56--67.

\bibitem[Lundberg and Lee, 2017]{Lundberg2017}
Lundberg, S.~M. and Lee, S.-I. (2017).
\newblock A unified approach to interpreting model predictions.
\newblock In Guyon, I., Luxburg, U.~V., Bengio, S., Wallach, H., Fergus, R.,
  Vishwanathan, S., and Garnett, R., editors, {\em Advances in Neural
  Information Processing Systems}, volume~30, pages 4768--4777. Curran
  Associates, Inc.

\bibitem[Luss and Rosset, 2014]{LussRosset2014}
Luss, R. and Rosset, S. (2014).
\newblock Generalized isotonic regression.
\newblock {\em Journal of Computational and Graphical Statistics},
  23(1):192--210.

\bibitem[Ma and Tourani, 2020]{pmlr-v127-ma20a}
Ma, S. and Tourani, R. (2020).
\newblock Predictive and causal implications of using shapley value for model
  interpretation.
\newblock In {\em Proceedings of the 2020 KDD Workshop on Causal Discovery},
  volume 127 of {\em Proceedings of Machine Learning Research}, pages 23--38.
  PMLR.

\bibitem[Maniar, 2023]{maniar2023kaggle}
Maniar, A. (2023).
\newblock Xgboost model optimization -- boston housing.
\newblock
  \url{https://www.kaggle.com/code/advikmaniar/xgboost-model-optimization-94-boston-housing/notebook}.

\bibitem[Marichal, 2000]{Marichal2000}
Marichal, J.-L. (2000).
\newblock An axiomatic approach of the discrete choquet integral as a tool to
  aggregate interaction effects.
\newblock {\em IEEE Transactions on Fuzzy Systems}, 8(6):800--807.

\bibitem[Mas-Colell et~al., 1995]{Mas-Colell1995}
Mas-Colell, A., Whinston, M.~D., and Green, J.~R. (1995).
\newblock {\em Microeconomic Theory}.
\newblock Oxford University Press.

\bibitem[McCullagh and Nelder, 1989]{mccullagh1989generalized}
McCullagh, P. and Nelder, J.~A. (1989).
\newblock {\em Generalized Linear Models}.
\newblock Chapman and Hall/CRC Monographs on Statistics and Applied Probability
  Series. Chapman \& Hall/CRC, London.

\bibitem[Muschalik et~al., 2024]{muschalik2024}
Muschalik, M., Fumagalli, F., Hammer, B., and H{\"u}llermeier, E. (2024).
\newblock shapiq: {Shapley} interactions for machine learning.
\newblock {\em Advances in Neural Information Processing Systems},
  37:130324--130357.

\bibitem[Owen, 2014]{owen2014sobol}
Owen, A.~B. (2014).
\newblock Sobol' indices and {S}hapley value.
\newblock {\em SIAM/ASA Journal on Uncertainty Quantification}, 2(1):245--251.

\bibitem[Pedregosa et~al., 2011]{pedregosa2011scikit}
Pedregosa, F., Varoquaux, G., Gramfort, A., Michel, V., Thirion, B., Grisel,
  O., Blondel, M., Prettenhofer, P., Weiss, R., Dubourg, V., Vanderplas, J.,
  Passos, A., Cournapeau, D., Brucher, M., Perrot, M., and Duchesnay, E.
  (2011).
\newblock Scikit-learn: Machine learning in {P}ython.
\newblock {\em Journal of Machine Learning Research}, 12:2825--2830.

\bibitem[Pratt, 1964]{Pratt1964}
Pratt, J.~W. (1964).
\newblock Risk aversion in the small and in the large.
\newblock {\em Econometrica}, 32(1-2):122--136.

\bibitem[Prokhorenkova et~al., 2018]{prokhorenkova2018catboost}
Prokhorenkova, L., Gusev, G., Vorobev, A., Dorogush, A.~V., and Gulin, A.
  (2018).
\newblock Catboost: unbiased boosting with categorical features.
\newblock In {\em Advances in Neural Information Processing Systems},
  volume~31, pages 6638--6648.

\bibitem[Ribeiro et~al., 2016]{ribeiro2016should}
Ribeiro, M.~T., Singh, S., and Guestrin, C. (2016).
\newblock ``{W}hy should {I} trust you?" {E}xplaining the predictions of any
  classifier.
\newblock In {\em Proceedings of the 22nd ACM SIGKDD international conference
  on knowledge discovery and data mining}, pages 1135--1144.

\bibitem[Robertson et~al., 1988]{robertson1988order}
Robertson, T., Wright, F.~T., and Dykstra, R.~L. (1988).
\newblock {\em Order restricted statistical inference}.
\newblock Wiley Series in Probability and Mathematical Statistics: Probability
  and Mathematical Statistics. John Wiley \& Sons Ltd., Chichester.

\bibitem[Shapley, 1953]{shapley1953value}
Shapley, L.~S. (1953).
\newblock A value for n-person games.
\newblock In Kuhn, H.~W. and Tucker, A.~W., editors, {\em Contributions to the
  Theory of Games II}, pages 307--317. Princeton University Press, Princeton.

\bibitem[Slack et~al., 2020]{slack2020fooling}
Slack, D., Hilgard, S., Jia, E., Singh, S., and Lakkaraju, H. (2020).
\newblock Fooling lime and shap: Adversarial attacks on post hoc explanation
  methods.
\newblock {\em Proceedings of the AAAI/ACM Conference on AI, Ethics, and
  Society}.

\bibitem[Smith et~al., 1988]{smith1988pima}
Smith, J.~W., Everhart, J.~E., Dickson, W.~C., Knowler, W.~C., and Johannes,
  C.~E. (1988).
\newblock Using the {ADA} diagnostic criteria to classify individuals and test
  glycemic models.
\newblock {\em Proceedings of the IEEE Symposium on Computer-Based Medical
  Systems}, pages 261--264.
\newblock Dataset popularly known as the Pima Indians Diabetes dataset.

\bibitem[Strumbelj and Kononenko, 2014]{strumbelj2014explaining}
Strumbelj, E. and Kononenko, I. (2014).
\newblock Explaining prediction models and individual predictions with feature
  contributions.
\newblock {\em Knowledge and Information Systems}, 41:647--665.

\bibitem[Sundararajan et~al., 2020]{Sundararajan2020}
Sundararajan, M., Dhamdhere, K., and Agarwal, A. (2020).
\newblock The shapley taylor interaction index.
\newblock In {\em Proceedings of the 37th International Conference on Machine
  Learning (ICML)}.

\bibitem[Tibshirani, 1996]{Tibshirani1996}
Tibshirani, R. (1996).
\newblock Regression shrinkage and selection via the lasso.
\newblock {\em JRSSB}, 58:267--288.

\bibitem[Tsai et~al., 2023]{TsaiYehRavikumar2023}
Tsai, C.~J., Yeh, C.-W., and Ravikumar, P. (2023).
\newblock Faith-shap: An axiomatic framework for faithful interaction
  attribution.
\newblock {\em Journal of Machine Learning Research}, 24(281):1--64.

\bibitem[Zhang, 2010]{zhang2010nearly}
Zhang, C.-H. (2010).
\newblock Nearly unbiased variable selection under minimax concave penalty.
\newblock {\em The Annals of Statistics}, 38(2):894--942.

\bibitem[Zhao and Yu, 2006]{Zhao2006}
Zhao, P. and Yu, B. (2006).
\newblock On model selection consistency of lasso.
\newblock {\em Journal of Machine Learning Research}, 7(81):2541--2563.

\end{thebibliography}

\end{document}